\documentclass{article}


\usepackage[arXiv]{aistats2020}


\usepackage{amsmath,amsfonts,bm}









\def\eqref#1{equation~\ref{#1}}









\def\1{\bm{1}}








\def\vx{{\bm{x}}}
\def\vy{{\bm{y}}}



\DeclareMathAlphabet{\mathsfit}{\encodingdefault}{\sfdefault}{m}{sl}
\SetMathAlphabet{\mathsfit}{bold}{\encodingdefault}{\sfdefault}{bx}{n}













\DeclareMathOperator*{\argmin}{arg\,min}

\usepackage[utf8]{inputenc} 
\usepackage[T1]{fontenc}    
\usepackage{hyperref}       
\usepackage{url}            
\usepackage{booktabs}       
\usepackage{amsfonts}       
\usepackage{nicefrac}       
\usepackage{microtype}      

\usepackage[square,numbers,compress]{natbib}
\bibliographystyle{unsrtnat}

\usepackage{color}
\usepackage{amsmath}
\usepackage{wrapfig}
\usepackage{tikz}
\usetikzlibrary{positioning}
\usepackage{algorithm}
\usepackage[noend]{algpseudocode}
\usepackage{siunitx}  
\usepackage{commath}  
\usepackage{mathtools}  
\usepackage{drawmatrix}
\usepackage{amsthm}  
\usepackage[english]{babel}
\usepackage{pifont}
\usepackage{multirow}
\usepackage{placeins}  

\definecolor{mydarkblue}{rgb}{0,0.08,0.45}
\definecolor{mydarkgreen}{rgb}{0.0, .4, 0.0}
\definecolor{mybrown}{rgb}{0.59, 0.29, 0.0}
\hypersetup{
    colorlinks=true,
    linkcolor=mydarkblue,
    citecolor=mydarkblue,
    filecolor=mydarkblue,
    urlcolor=mydarkblue}

\newcommand{\mineq}{\mathrel{-}=}
\newtheorem{thm}{Theorem}  

\newtheorem*{thm*}{Theorem}  
\newtheorem*{lemma*}{Lemma}

\newcommand{\defeq}{\vcentcolon=}

\usepackage{xcolor}

%
%

\newcommand{\Real}{\mathbb{R}}  
\newcommand{\ep}{\textbf{w}}  
\newcommand{\hparam}{\boldsymbol{\lambda}}
\newcommand{\hp}{\boldsymbol{\lambda}}
\newcommand{\edim}{m}  
\newcommand{\hdim}{n} 
\newcommand{\edom}{\textbf{W}}  
\newcommand{\xdom}{\mathcal{X}}  
\newcommand{\tdom}{\mathcal{Y}}  
\newcommand{\hdom}{\boldsymbol{\Lambda}} 
\newcommand{\loss}{\mathcal{L}}  
\newcommand{\dataset}{\mathcal{D}}  
\newcommand{\Ltr}{\loss_{\!\text{T}}}  
\newcommand{\Lval}{\loss_{\!\text{V}}}  

\newcommand{\hpderiv}[1]{\pd{#1}{\hp}}
\newcommand{\epderiv}[1]{\pd{#1}{\ep}}
\newcommand{\ehpderiv}[1]{\md{#1}{2}{\ep}{}{\hp}{T}}
\newcommand{\eepderiv}[1]{\md{#1}{2}{\ep}{}{\ep}{T}}

\newcommand{\trainGrad}{\epderiv{\Ltr}}
\newcommand{\trainHess}[1]{\eepderiv{\Ltr}}
\newcommand{\trainMixed}[1]{\ehpderiv{\Ltr}}
\newcommand{\response}{\ep^{\!*}\!(\!\hp\!)}
\newcommand{\responseJacobian}{\hpderiv{\response}}
\newcommand{\LvalResponse}{\Lval\!(\!\hp,\!\response\!)}
\newcommand{\rpm}{\raisebox{.2ex}{$\scriptstyle\pm$}}

\newcommand{\nn}{NN }
\newcommand{\nns}{NNs }
\newcommand{\NN}{NN }
\newcommand{\ho}{HO }
\newcommand{\HO}{HO }
\newcommand{\Ho}{HO }

\begin{document}
    \twocolumn[
        \vspace{-1cm}
        \aistatstitle{Optimizing Millions of Hyperparameters by Implicit Differentiation}
        \vspace{-0.03\textheight}
        \aistatsauthor{Jonathan Lorraine \And Paul Vicol \And David Duvenaud}
        \aistatsaddress{University of Toronto, Vector Institute}
        \vspace{-0.035\textheight}
        \aistatsaddress{\texttt{\{lorraine, pvicol, duvenaud\}@cs.toronto.edu}}
        \vspace{-0.015\textheight}
    ]
    
    \begin{abstract}
        \vspace{-0.4cm}
        We propose an algorithm for inexpensive gradient-based hyperparameter optimization that combines the implicit function theorem (IFT) with efficient inverse Hessian approximations.
        We present results on the relationship between the IFT and differentiating through optimization, motivating our algorithm.
        We use the proposed approach to train modern network architectures with millions of weights and \textit{millions of hyperparameters}.
        We learn a data-augmentation network---where every weight is a hyperparameter tuned for validation performance---that outputs augmented training examples; we learn a distilled dataset where each feature in each datapoint is a hyperparameter; and we tune millions of regularization hyperparameters.
        Jointly tuning weights and hyperparameters with our approach is only a few times more costly in memory and compute than standard training.
    \end{abstract}
    \vspace{-0.4cm}
    \section{Introduction}\label{sec:introduction}
    \vspace{-0.3cm}
        The generalization of neural networks (\nn\!\!s) depends crucially on the choice of hyperparameters.
        Hyperparameter optimization (HO) has a rich history~\citep{schmidhuber1987evolutionary, bengio2000gradient}, and achieved recent success in scaling due to gradient-based optimizers~\citep{domke2012generic, maclaurin2015gradient, franceschi2017forward, franceschi2018bilevel, shaban2018truncated, finn2017model, rajeswaran2019meta, liu2018darts, grefenstette2019meta}.
        There are dozens of regularization techniques to combine in deep learning, and each may have multiple hyperparameters~\cite{kukavcka2017regularization}.
        If we can scale HO to have as many---or more---hyperparameters as parameters, there are various exciting regularization strategies to investigate.
        For example, we could learn a distilled dataset with a hyperparameter for every feature of each input~\cite{maclaurin2015gradient, wang2018dataset}, weights on each loss term~\cite{ren2018learning, kim2018screenernet, zhang2019autoassist}, or augmentation on each input~\cite{cubuk2018autoaugment, xie2019unsupervised}.

        When the hyperparameters are low-dimensional---e.g., \num{1}-\num{5} dimensions---simple methods, like random search, work; however, these break down for medium-dimensional HO---e.g., \num{5}-\num{100} dimensions.
        We may use more scalable algorithms like Bayesian Optimization~\citep{movckus1975bayesian, snoek2012practical, kandasamy2019tuning}, but this often breaks down for high-dimensional HO---e.g., >\num{100} dimensions.
        We can solve high-dimensional \ho problems locally with gradient-based optimizers, but this is difficult because we must differentiate through the optimized weights as a function of the hyperparameters.
        In other words, we must approximate the Jacobian of the best-response function of the parameters to the hyperparameters.
        
        We leverage the Implicit Function Theorem (IFT) to compute the optimized validation loss gradient with respect to the hyperparameters---hereafter denoted the \textit{hypergradient}.
        The IFT requires inverting the training Hessian with respect to the \nn weights, which is infeasible for modern, deep networks.
        Thus, we propose an approximate inverse, motivated by a link to unrolled differentiation~\cite{domke2012generic} that scales to Hessians of large NNs, is more stable than conjugate gradient~\cite{liao2018reviving, shaban2018truncated}, and only requires a constant amount of memory.
        
        Finally, when fitting many parameters, the amount of data can limit generalization.
        There are \textit{ad hoc} rules for partitioning data into training and validation sets---e.g., using \num{10}\% for validation.
        Often, practitioners re-train their models from scratch on the combined training and validation partitions with optimized hyperparameters, which can provide marginal test-time performance increases.
        We verify empirically that standard partitioning and re-training procedures perform well when fitting few hyperparameters, but break down when fitting many.
        When fitting many hyperparameters, we need a large validation partition, which makes re-training our model with optimized hyperparameters vital for strong test performance.
        
        \begin{figure*}[ht!]
            \vspace{-0.02\textheight}
            \centering
            \begin{tikzpicture}
                \centering
                \node (img)[xshift=-1cm]{\includegraphics[trim={1.5cm, .35cm, 1.0cm, 3.1cm},clip, width=.46\linewidth]{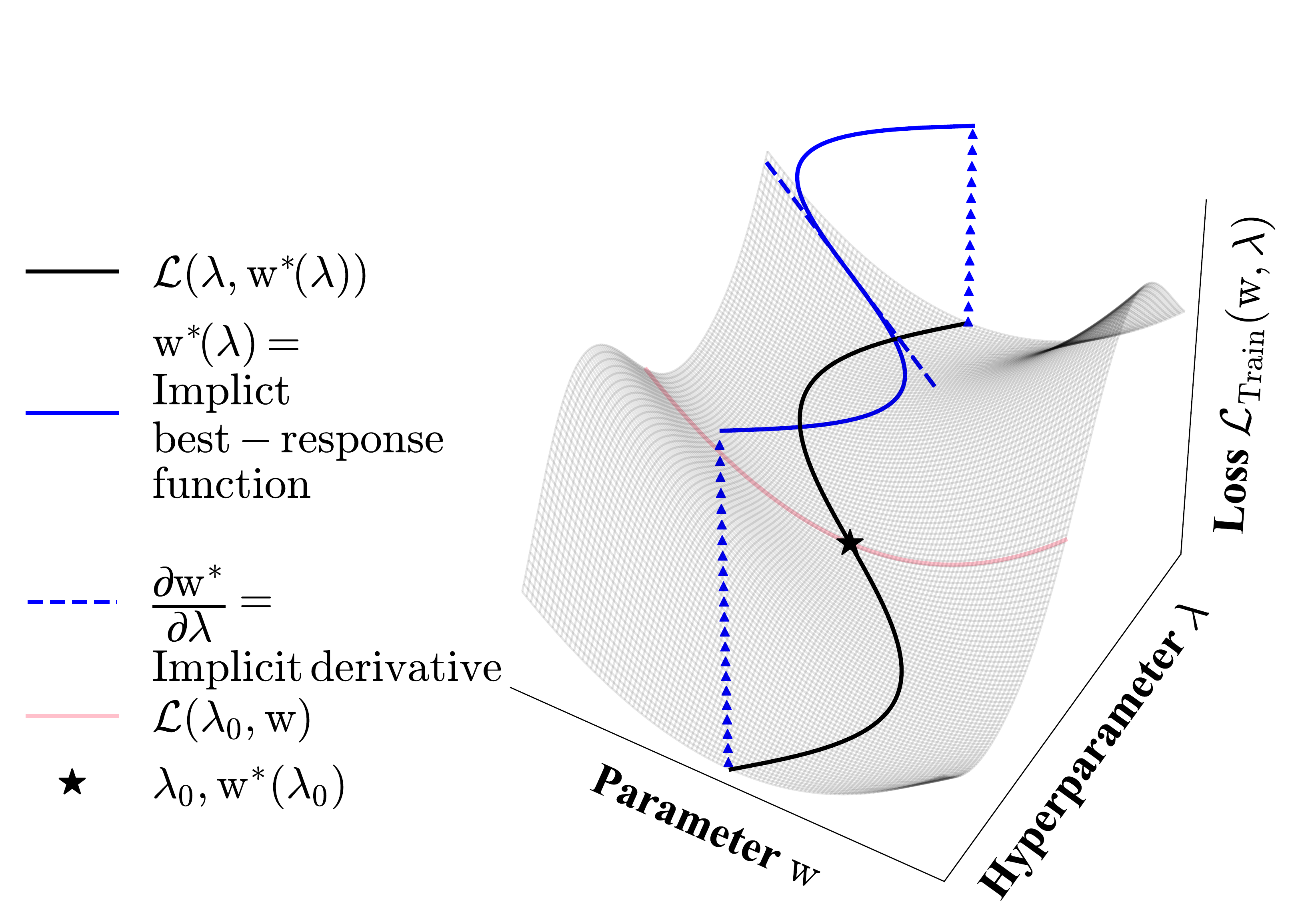}};
                
                \node (img2)[right=of img, xshift=-1.cm]{\includegraphics[trim={3.7cm, .35cm, 1.2cm, 2.6cm},clip, width=.46\linewidth]{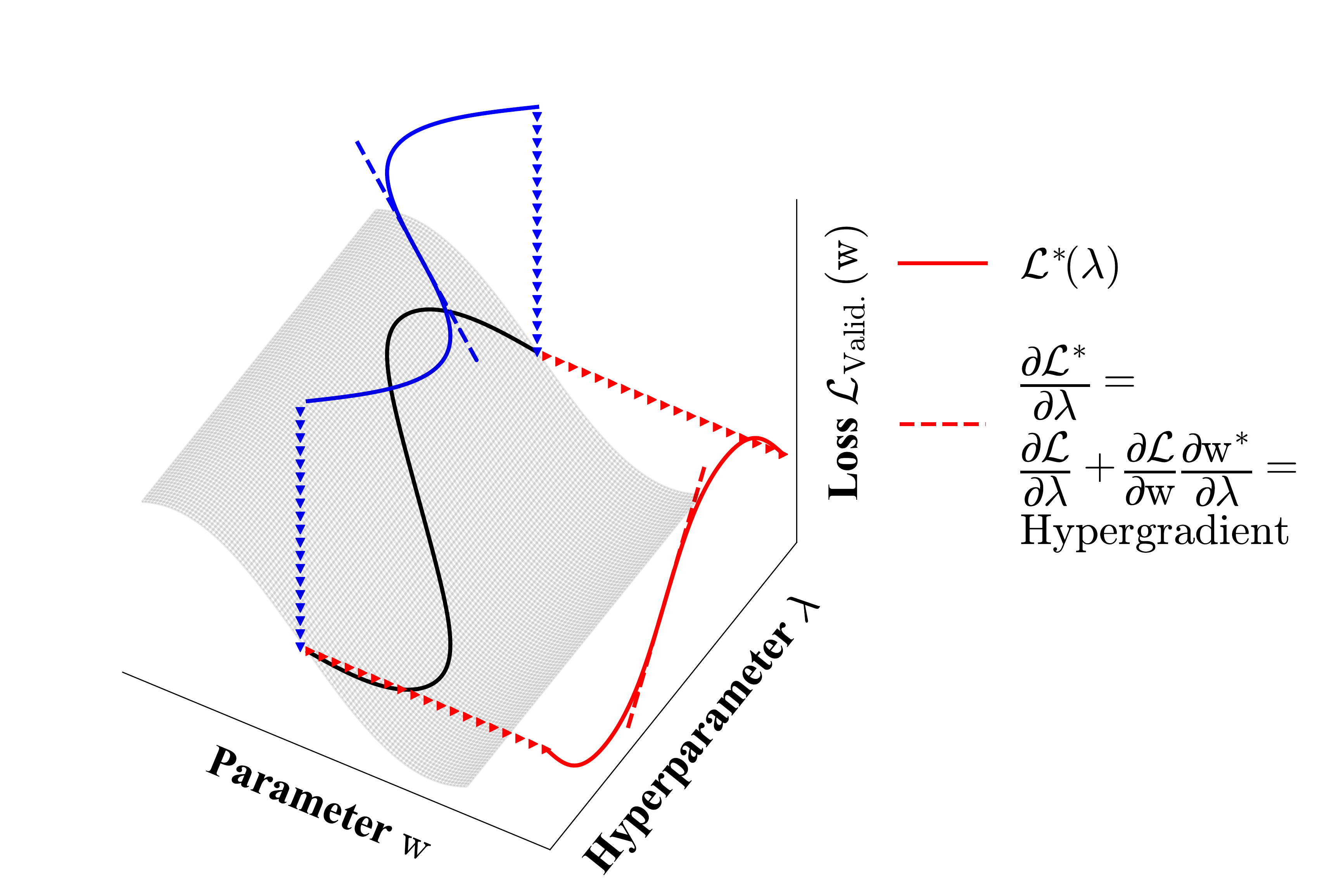}};
            \end{tikzpicture}
            \vspace{-0.015\textheight}
            \caption{
               Overview of gradient-based hyperparameter optimization (HO).
               \emph{Left:} a training loss manifold; \emph{Right:} a validation loss manifold.
               {{The implicit function $\ep^*(\hp)$ is the best-response of the weights to the hyperparameters}} and shown in {\color{blue}blue} projected onto the $(\hp, \ep)$-plane.
               We get our desired objective function {\color{red}$\Lval^{*}(\hp)$} when the best-response is put into the validation loss, shown projected on the hyperparameter axis in {\color{red}red}.
               The validation loss does not depend directly on the hyperparameters, as is typical in hyperparameter optimization.
               Instead, the hyperparameters only affect the validation loss by changing the weights' response.
               We show the best-response Jacobian in {\color{blue}blue}, and the hypergradient in {\color{red} red}.
            }
            \label{fig:train_and_val_manifolds}
            \vspace{-0.015\textheight}
        \end{figure*}
    \vspace{-0.2cm}
    \subsection*{Contributions}
    \vspace{-0.1cm}
        \begin{itemize}
            \item We propose a stable inverse Hessian approximation with constant memory cost.
            \item We show that the IFT is the limit of differentiating through optimization.
            \item We scale IFT-based hyperparameter optimization to modern, large neural architectures, including AlexNet and LSTM-based language models.
            \item We demonstrate several uses for fitting hyperparameters almost as easily as weights, including per-parameter regularization, data distillation, and learned-from-scratch data augmentation methods.
            \item We explore how training-validation splits should change when tuning many hyperparameters.
        \end{itemize}
    \newcommand{\sectionSymbol}{Section}
    
    \section{Overview of Proposed Algorithm}
    \label{sec:background}
        \vspace{-0.3cm}
        There are four essential components to understanding our proposed algorithm.
        Further background is provided in Appendix~\ref{app:background}, and notation is shown in Table~\ref{tab:TableOfNotation}.
        
        \textbf{1. \Ho is nested optimization:}
            Let $\Ltr$ and $\Lval$ denote the training and validation losses, $\ep$ the \nn weights, and $\hp$ the hyperparameters.
            We aim to find optimal hyperparameters $\hp^*$ such that the \nn minimizes the validation loss after training:
            \begin{align} \label{eqn:main-equilibrium}
                &\smash{\hp^*} \!\defeq\! \argmin_{\hp} \smash{{\color{red} \Lval^*\!(\!\hp\!)}} \text{ where } \\
                &{\color{red}\smash{\Lval^*\!(\!\hp\!)}} \!\defeq\! \smash{\LvalResponse \text{ and }}
                \smash{\response} \!\defeq\! \smash{\argmin_{\ep} \Ltr\!(\!\hp,\!\ep\!)}
            \end{align}
            Our implicit function is $\response$, which is the \emph{best-response} of the weights to the hyperparameters.
            We assume unique solutions to $\argmin$ for simplicity.

        \textbf{2. Hypergradients have two terms:}
            For gradient-based \ho we want the hypergradient $\hpderiv{\Lval^*(\hp)}$, which decomposes into:
            \begin{align}\label{eqn:gradient-decomp}
                \begin{split}
                    &\underbrace{\hpderiv{{\color{red} \Lval^*(\hp)}}}_{\text{\tiny{hypergradient}}}
                    = \left. \left( {\color{mydarkgreen}\hpderiv{\Lval}} + \epderiv{\Lval} {\color{blue}\hpderiv{\ep^*}} \right) \right|_{\hp, \ep^*\!(\hp)} =\\ 
                    &{\!\!\!\!\!\! \color{mydarkgreen}\underbrace{\hpderiv{\LvalResponse}}_{\text{\tiny{hyperparam direct grad.}}}} \!\!\!+ \hphantom{A}
                    \overbrace{\!\!\!\!\!\!\!\!
                        \underbrace{\pd{\LvalResponse}{\response}}_{\text{\tiny{parameter direct grad.}}}
                        \times \!\!\!\!
                        {\color{blue}\underbrace{\responseJacobian}_{\text{\tiny{best-response Jacobian}}}}
                    \!\!\!\!\!\!\!\!\!\!\!\!\!\!}^{\text{\tiny{hyperparam indirect grad.}}}
                \end{split}
            \end{align}
        The {\color{mydarkgreen}direct gradient} is easy to compute, but the indirect gradient is difficult to compute because we must account for how the optimal weights change with respect to the hyperparameters (i.e., {\color{blue}$\smash{\responseJacobian}$}).
        In \ho the {\color{mydarkgreen}direct gradient} is often identically \num{0}, necessitating an approximation of the indirect gradient to make any progress (visualized in Fig.~\ref{fig:train_and_val_manifolds}).

        \textbf{3. We can estimate the implicit best-response with the IFT:}
            We approximate the best-response Jacobian---how the optimal weights change with respect to the hyperparameters---using the IFT (Thm. ~\ref{thm:IFT}).
            We present the complete statement in Appendix~\ref{app:IFTComplete}, but highlight the key assumptions and results here.
            \begin{thm}[Cauchy, Implicit Function Theorem]
                If for some \textnormal{$(\hp', \ep'), \left. \smash{\trainGrad} \right|_{\hp', \ep'} = 0$} and regularity conditions are satisfied, then surrounding \textnormal{$(\hp', \ep')$} there is a function \textnormal{$\ep^{*}(\hp)$} s.t. \textnormal{$\left. \smash{\trainGrad} \right|_{\hp, \response} = 0$} and we have:
                \textnormal{
                    \begin{equation}\label{eqn:response-approximation}\tag{IFT}
                        \!\!\left. {\color{blue} \hpderiv{\ep^*}} \!\right|_{\hp'}\!
                        = \!-{\color{magenta} \Big[ \!\!\! \!\!\! \underbrace{\trainHess{\ep}}_{\text{\tiny training Hessian}} \!\!\! \!\!\! \Big]^{-1}}  \times \!\!\!\!\!\!
                        \underbrace{\trainMixed{\ep}}_{\text{training mixed partials}} \!\!\!\!\!\!\!\!\!\!\!\! \Big|_{\hp' \!, \ep^*\!(\hp')}
                    \end{equation}
                }
                \vspace{-0.0225\textheight}
                \label{thm:IFT}
            \end{thm}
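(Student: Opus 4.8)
The plan is to treat this as an instance of the classical implicit function theorem for the vector field $F(\hp,\ep) \defeq \trainGrad$, whose zero set I want to express locally as a graph $\ep = \response$ over the hyperparameters. The ``regularity conditions'' I will invoke are that $\Ltr$ is twice continuously differentiable in a neighbourhood of $(\hp',\ep')$---so that $F$ is $C^1$ there---and that the training Hessian $\trainHess{\ep}$ is nonsingular at $(\hp',\ep')$. These are precisely what makes the linearization of $F(\hp,\ep)=0$ in $\ep$ uniquely solvable, and the whole argument is about upgrading that algebraic fact to a genuine differentiable solution map.

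First I would establish existence and local uniqueness of $\response$ by a contraction-mapping argument. Writing $H$ for the invertible Hessian $\trainHess{\ep}$ evaluated at $(\hp',\ep')$, I define the $\hp$-parametrized map $T_\hp(\ep) \defeq \ep - H^{-1} F(\hp,\ep)$, whose fixed points are exactly the solutions of $F(\hp,\ep)=0$. Its $\ep$-Jacobian is $I - H^{-1}\trainHess{\ep}$, which vanishes at $(\hp',\ep')$, so by continuity of the derivatives there is a closed ball $\overline{B}(\ep',r)$ and a neighbourhood $U$ of $\hp'$ on which $T_\hp$ has Lipschitz constant at most $\tfrac12$ in $\ep$; after shrinking $U$ so that $T_\hp$ also maps this ball into itself, Banach's fixed-point theorem yields a unique fixed point $\response \in \overline{B}(\ep',r)$ for each $\hp \in U$, with $\ep^*(\hp') = \ep'$. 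Continuity of $\response$ on $U$ then follows from the standard estimate bounding $\|\ep^*(\hp_1) - \ep^*(\hp_2)\|$ by twice $\|T_{\hp_1}(\ep^*(\hp_2)) - T_{\hp_2}(\ep^*(\hp_2))\|$, together with continuity of $F$ in $\hp$.

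Next I would show $\response$ is differentiable on $U$ and simultaneously read off its Jacobian. Since $\trainHess{\ep}$ is continuous and invertible at $(\hp',\ep')$, it stays invertible on a neighbourhood; expanding the identity $0 = F(\hp+\delta\hp,\ \response+\delta\ep)$ to first order and inverting the Hessian block lets me solve $\delta\ep = -[\,\trainHess{\ep}\,]^{-1}\trainMixed{\ep}\,\delta\hp + o(\|\delta\hp\|)$, which establishes differentiability with the claimed derivative. The cleaner route to the formula, once differentiability is in hand, is to differentiate the identity $F(\hp,\response) \equiv 0$ on $U$ by the chain rule, obtaining $\trainMixed{\ep} + \trainHess{\ep}\,\responseJacobian = 0$ evaluated at $(\hp,\response)$, and then to left-multiply by the inverse Hessian and rearrange, giving $\responseJacobian = -[\,\trainHess{\ep}\,]^{-1}\trainMixed{\ep}$ at $(\hp',\ep^*(\hp'))$---the stated closed form, using $F = \trainGrad$ so that its $\ep$-derivative is the training Hessian and its $\hp$-derivative the matrix of training mixed partials.

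The hard part is the first step: the derivative formula is a one-line chain-rule computation, but constructing $\response$ itself---existence, local uniqueness, and the uniform-over-$U$ estimates behind continuity and differentiability---is where the contraction argument and the unstated regularity hypotheses actually do their work. A secondary subtlety worth flagging in the write-up is that a vanishing gradient together with a nonsingular Hessian pins down only a locally unique \emph{critical point} of $\Ltr(\hp,\cdot)$; identifying it with the minimizer $\response$ of \eqref{eqn:main-equilibrium} relies on the paper's standing assumption that the inner $\argmin$ is unique, and on it being attained at an interior point where the gradient vanishes.
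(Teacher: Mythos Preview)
Your proposal is correct, but the paper does not actually prove this statement: Theorem~\ref{thm:IFT} is quoted as a classical result attributed to Cauchy, with the complete statement relegated to Appendix~\ref{app:IFTComplete} and no proof supplied. The nearest the paper comes is Lemma~1 in Appendix~\ref{app:proofs}, which \emph{assumes} a fixed point $\epA$ of the training-gradient field exists and then derives the Jacobian formula by differentiating the identity $\left.\trainGrad\right|_{\hp,\epA(\hp)}=0$ in $\hp$, using the chain rule and left-multiplying by the inverse Hessian---precisely your ``cleaner route'' in the second step.

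What you add over the paper is the existence and regularity of $\response$ itself, via the Newton-type contraction $T_\hp(\ep)=\ep-H^{-1}F(\hp,\ep)$ and Banach's fixed-point theorem. That is one of the standard textbook proofs of the IFT and is entirely sound; it also makes explicit what the paper's phrase ``regularity conditions'' is hiding (namely $\Ltr\in C^2$ near the base point and invertibility of the training Hessian there). Your closing caveat about critical point versus minimizer is apt and goes beyond anything the paper says.
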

            The condition $\smash{\left. \smash{\trainGrad} \right|_{\hp' \! , \! \ep'} = 0}$ is equivalent to $\hp', \ep'$ being a fixed point of the training gradient field.
            Since $\smash{\ep^*(\hp')}$ is a fixed point of the training gradient field, we can leverage the IFT to evaluate the best-response Jacobian locally.
            We only have access to an approximation of the true best-response---denoted $\smash{\widehat{\ep^*}}$---which we can find with gradient descent.

        \textbf{4. Tractable inverse Hessian approximations:}
            To exactly invert a general $m \times m$ Hessian, we often require $\mathcal{O}(m^3)$ operations, which is intractable for the matrix in Eq.~\ref{eqn:response-approximation} in modern \nns\!\!.
            We can efficiently approximate the inverse with the Neumann series:
            \begin{equation}\label{eq:neumann}
                {\color{magenta}\left[ \trainHess{\ep} \right]^{-1}} = {\color{magenta}\lim_{i \to \infty} \sum_{j = 0}^{i} \left[ I - \trainHess{\ep} \right]^j}
            \end{equation}
            In \sectionSymbol~\ref{sec:theory} we show that unrolling differentiation for $i$ steps around locally optimal weights $\ep^*$ is equivalent to approximating the inverse with the first $i$ terms in the Neumann series.
            We then show how to use this approximation without instantiating any matrices by using efficient vector-Jacobian products.
            \begin{figure}
                \vspace{-0.03\textheight}
                \begin{align*}
                    \overbrace{\drawmatrix[fill=red, height=.1, width=.5]\!}^{\hpderiv{{\color{red}\Lval^*}}}
                    =& \,\,\,\,\, \,\,
                    \overbrace{\drawmatrix[height=.1, width=.5, fill=mydarkgreen]\!}^{\color{mydarkgreen}\hpderiv{\Lval}} \,\,\,\,\,\,\,\, + \,\,\,\,\,\,\,
                    \overbrace{\drawmatrix[height=.1]\!}^{\epderiv{\Lval}} \; \,\,\,\,\,\,
                    \overbrace{\drawmatrix[width=.5, fill=blue]\!}^{\color{blue} \smash{\hpderiv{\ep^*}}\vphantom{A_{A_{A_A}}}} 
                    \\
                    =& \,\,\,\,\,\,\,\,
                    \overbrace{\drawmatrix[height=.1, width=.5, fill=mydarkgreen]\!}^{\color{mydarkgreen}\hpderiv{\Lval}} \,\,\,\,\,\,\, +
                    {\color{orange}\underbrace{
                        \overbrace{\drawmatrix[height=.1]\!}^{\color{black}\epderiv{\Lval}} \;
                        \overbrace{\drawmatrix[fill=magenta]\!}^{\color{magenta}\hphantom{a} -\big[\! \smash{\trainHess{\ep}} \!\big]^{\!-\!1}} \;
                    \!\!\!\!\!\!}_{\text{vector-inverse Hessian product}}}\,\,\,\,\,
                    \overbrace{\drawmatrix[width=.5]\!}^{\smash{\trainMixed{\ep}}\vphantom{A_{A_A}}}
                    \\
                    =& \,\,\,\,\,\,\,\,
                    \overbrace{\drawmatrix[height=.1, width=.5, fill=mydarkgreen]\!}^{\color{mydarkgreen}\hpderiv{\Lval}} \,\,\,\,\,\,\, + \,\,\,\,\,\,\,\,\,\,\,\,\,\,\,\,\,\,\,\,\,\,\,\,\,\,
                    \underbrace{\!\!\!\!\!\!
                        {\color{orange} \overbrace{\drawmatrix[height=.1, fill=orange]\!}^{\,\,\,\,\epderiv{\Lval}\times -\big[\! \trainHess{\ep} \!\big]^{\!-\!1}}} \;
                        \overbrace{\drawmatrix[width=.5]\!}^{\smash{\trainMixed{\ep}}\vphantom{A_{A_A}}}
                    \!\!\!}_{\text{vector-Jacobian product}}
                \end{align*}
                \vspace{-0.035\textheight}
                \caption{
                    {Hypergradient computation.  The entire computation can be performed efficiently using vector-Jacobian products, provided a cheap approximation to the inverse-Hessian-vector product is available.}
                }
                \label{fig:equations}
                \vspace{0.2cm}
            \end{figure}
            \begin{algorithm}[h!]
                \caption{Gradient-based \ho for \smash{$\hp^*, \ep^*(\hp^*)$}}
                \label{alg:joint_train}
                \begin{algorithmic}[1]
                    \State Initialize hyperparameters $\hp'$ and weights $\ep'$ 
                    \While{\text{not converged}}
                        \For{$k = 1 \dots N$} 
                            \State $\smash{\ep' \mineq \left. \alpha \cdot \smash{\trainGrad} \right|_{\hp', \ep'}}$ 
                        \EndFor
                        \State $\smash{\hp' \mineq {\color{red}\texttt{hypergradient}(\Lval, \Ltr, \hp', \ep')}}$ 
                    \EndWhile
                    \State \Return $\smash{\hp', \ep'}$ \Comment{$\smash{\hp^*, \ep^*(\hp^*)}$ from \textbf{Eq.\ref{eqn:main-equilibrium}}}
                \end{algorithmic}
            \end{algorithm}
            \vspace{-0.2cm}
            \begin{algorithm}[h!]
                \caption{{\color{red}\smash{\texttt{hypergradient}($\Lval, \Ltr, \hp', \ep'$)}}}
                \label{alg:hyper_gradient}
                \begin{algorithmic}[1]
                    \State $\textbf{v}_1 = \left. \smash{\epderiv{\Lval}} \right|_{\hp', \ep'}$ 
                    \State {\color{orange}$\textbf{v}_2 = \texttt{approxInverseHVP} (\textbf{v}_1, \epderiv{\Ltr})$} 
                    \State $\textbf{v}_3 = \texttt{grad}(\hpderiv{\Ltr}, \ep, \texttt{grad\_outputs}=\textbf{v}_2)$
                    \State \Return $\smash{\left. \smash{\color{mydarkgreen}\hpderiv{\Lval}} \right|_{\hp', \ep'} - \textbf{v}_3}$
                    \Comment{Return to \textbf{Alg.~\ref{alg:joint_train}}}
                \end{algorithmic}
            \end{algorithm}
            \begin{algorithm}[h!]
                \caption{{\color{orange}\texttt{approxInverseHVP}($\textbf{v}, \textbf{f}$)}: Neumann approximation of inverse-Hessian-vector product $\textbf{v}\!\left[\smash{\epderiv{\textbf{f}}}\right]^{-1}$}\label{alg:approxInverseHVP}
                \begin{algorithmic}[1]
                    \State Initialize sum $\textbf{p} = \textbf{v}$
                    \For{$j = 1 \dots i$}
                        \State $\textbf{v} \mineq \alpha \cdot \texttt{grad}(\textbf{f}, \ep, \texttt{grad\_outputs}=\textbf{v})$ 
                        \State $\textbf{p} \mineq \textbf{v}$
                    \EndFor
                    \State \Return $\textbf{p}$  \Comment{Return to \textbf{Alg.~\ref{alg:hyper_gradient}}}.
                \end{algorithmic}
            \end{algorithm}
            
            \vspace{-0.4cm}
            \subsection{Proposed Algorithms}
            \vspace{-0.2cm}
                We outline our method in Algs.~\ref{alg:joint_train},~\ref{alg:hyper_gradient}, and~\ref{alg:approxInverseHVP}, where $\alpha$ denotes the learning rate.
                Alg.~\ref{alg:approxInverseHVP} is also shown in \cite{liao2018reviving}.
                We visualize the hypergradient computation in Fig.~\ref{fig:equations}.
            
    \vspace{-0.1cm}
    \section{Related Work}\label{sec:related-work}
    \vspace{-0.3cm}
        \begin{table*}[h]
            \footnotesize
            \vspace{-0.015\textheight}
            \begin{center}
                \begin{tabular}{c | c | c | r}
                    \textbf{Method} \vphantom{\Big |}
                    & \textbf{Steps}
                    & \textbf{Eval.}
                    & \textbf{Hypergradient Approximation\hphantom{AAAAAA}} \\
                \hline
                    Exact IFT
                    &$\infty$
                    &$\ep^*\!(\hp)$
                    &${\color{mydarkgreen}\hpderiv{\Lval}} - \epderiv{\Lval}\times$\hfill
                        $\left. {\color{magenta}\left[ \trainHess{\ep} \right]^{-1}} \trainMixed{\ep} \right|_{\response}$\,\,\,\\
                    
                    Unrolled Diff.~\citep{maclaurin2015gradient}
                    &$i$
                    &$\ep_0$
                    &${\color{mydarkgreen}\hpderiv{\Lval}} - \epderiv{\Lval}\times$\hfill
                        $\sum_{j\leq i} \! \left[ \! \prod_{k<j} I -  \left. \trainHess{\ep_{i-k}(\hp)} \right|_{\ep_{i-k}} \! \right] \! \left. \trainMixed{\ep_{i-j}(\hp)} \vphantom{A^{A^{A^A}}}\right|_{\ep_{i-j}\hphantom{\,\,}}$\\
                    
                    $L$-Step Truncated Unrolled Diff.~\citep{shaban2018truncated}
                    &$i$
                    &$\ep_L$
                    &${\color{mydarkgreen}\hpderiv{\Lval}} - \epderiv{\Lval}\times$\hfill
                        $\sum_{L \leq j\leq i} \! \left[ \! \prod_{k<j} I -  \left. \trainHess{\ep_{i-k}(\hp)} \right|_{\ep_{i-k}} \! \right] \! \left. \trainMixed{\ep_{i-j}(\hp)} \vphantom{A^{A^{A^A}}} \right|_{\ep_{i-j}\hphantom{\,\,}}$\\
                    
                    \citet{larsen1996design}
                    &$\infty$
                    &$\widehat{\ep^*}\!(\hp)$
                    &${\color{mydarkgreen}\hpderiv{\Lval}} - \epderiv{\Lval}\times$\hfill
                        $\left. {\color{magenta}\left[\! \trainGrad \! \trainGrad^{\! \! T} \!\right]^{-1}} \trainMixed{\ep} \right|_{\widehat{\ep^*}(\hp)}$\\
                    
                    \citet{bengio2000gradient}
                    &$\infty$
                    &$\widehat{\ep^*}\!(\hp)$
                    &${\color{mydarkgreen}\hpderiv{\Lval}} - \epderiv{\Lval}\times$\hfill
                        $\left. {\color{magenta}\left[ \trainHess{\ep} \right]^{-1}} \trainMixed{\ep} \right|_{\widehat{\ep^*}(\hp)}$\\
                    
                    $T1 - T2$~[\citenum{luketina2016scalable}]
                    &$1$
                    &$\widehat{\ep^*}\!(\hp)$
                    &${\color{mydarkgreen}\hpderiv{\Lval}} - \epderiv{\Lval}\times$\hfill
                    $\left.{\color{magenta}\left[I\right]^{-1}} \trainMixed{\ep} \right|_{\widehat{\ep^*}(\hp)}$\\
                    
                    \textbf{Ours}
                    &$i$
                    &$\widehat{\ep^*}\!(\hp)$
                    &${\color{mydarkgreen}\hpderiv{\Lval}} - \epderiv{\Lval}\times$\hfill
                        $\left. {\color{magenta} \left( \sum_{j < i} \left[I - \trainHess{\ep(\hp)}\right]^j \right)} \trainMixed{\ep(\hp)} \right|_{\widehat{\ep^*}(\hp)}$\\
                    
                    Conjugate Gradient (CG) $\approx$
                    &-
                    &$\widehat{\ep^*}\!(\hp)$
                    &${\color{mydarkgreen}\hpderiv{\Lval}} \,\,-$\hfill
                        $\left. {\color{orange}\left(\argmin_{\textbf{x}}\| \textbf{x} \trainHess{\ep} - \epderiv{\Lval} \|\right)} \trainMixed{\ep(\hp)} \right|_{\widehat{\ep^*}(\hp)}$\\
                    
                    Hypernetwork~\citep{lorraine2018stochastic, mackay2019self}
                    &-
                    &-
                    &${\color{mydarkgreen}\hpderiv{\Lval}} + \epderiv{\Lval}\times$\hfill
                        ${\color{blue}\hpderiv{\ep_{\phi}^*}}$ where $\ep_{\phi}^*(\hp) = \argmin_{\phi} \Ltr(\hp, \ep_{\phi}(\hp))$\\
                    
                    Bayesian Optimization~\citep{movckus1975bayesian, snoek2012practical, shaban2018truncated} $\approx$
                    &-
                    &-
                    &$\hpderiv{\mathbb{E}[\color{red}\Lval^{*}]}$ where ${\color{red}\Lval^{*}} \sim \textnormal{Gaussian-Process}(\{\hp_i, \Lval(\hp_i, \ep^{*}(\hp_i))\})$\\
                \end{tabular}
            \end{center}
            \vspace{-0.01\textheight}
            \caption{
                {An overview of methods to approximate hypergradients.}
                Some methods can be viewed as using an approximate inverse in the IFT, or as differentiating through optimization around an evaluation point.
                Here, $\smash{\widehat{\ep^*}(\hp)}$ is an approximation of the best-response at a fixed $\hp$, which is often found with gradient descent.
            }
            \label{tab:response_approxes}
        \end{table*}
        \begin{table}[ht]
            \vspace{-0.015\textheight}
            \begin{center}
                \begin{tabular}{c|c}
                    \textbf{Method} & \textbf{Memory Cost} \\
                    \hline
                    Diff. through Opt.~\cite{domke2012generic, maclaurin2015gradient, shaban2018truncated}
                        & $\mathcal{O}(PI + H)$ \\
                    Linear Hypernet~\cite{lorraine2018stochastic}
                        & $\mathcal{O}(PH)$ \\
                    Self-Tuning Nets (STN)~\cite{mackay2019self}
                        & $\mathcal{O}((P+H)K)$ \\
                    Neumann/CG IFT
                        & $\mathcal{O}(P + H)$
                \end{tabular}
            \end{center}
            \vspace{-0.01\textheight}
            \caption{
                {Gradient-based methods for \ho\!\!.}
                Differentiation through optimization scales with the number of unrolled iterations $I$; the STN scales with bottleneck size $K$, while our method only scales with the weight and hyperparameter sizes $P$ and $H$.
            }
            \label{tab:hyper_methods}
            \vspace{-0.015\textheight}
        \end{table}
        \textbf{Implicit Function Theorem.}
            The IFT has been used for optimization in nested optimization problems~\citep{ochs2015bilevel, anonymous2019ridge}, backpropagating through arbitrarily long RNNs~\citep{liao2018reviving}, or even efficient $k$-fold cross-validation~\citep{beirami2017optimal}.
            Early work applied the IFT to regularization by explicitly computing the Hessian (or Gauss-Newton) inverse~\citep{larsen1996design, bengio2000gradient}.
            In \cite{luketina2016scalable}, the identity matrix is used to approximate the inverse Hessian in the IFT.
            HOAG~\citep{pedregosa2016hyperparameter} uses conjugate gradient (CG) to invert the Hessian approximately and provides convergence results given tolerances on the optimal parameter and inverse.
            In iMAML~\citep{rajeswaran2019meta}, a center to the weights is fit to perform well on multiple tasks---contrasted with our use of validation loss.
            In DEQ~\citep{bai2019deep}, implicit differentiation is used to add differentiable fixed-point methods into \nn architectures.
            We use a Neumann approximation for the inverse-Hessian, instead of CG~\citep{pedregosa2016hyperparameter,rajeswaran2019meta} or the identity.
        
        \textbf{Approximate inversion algorithms.}
            CG is difficult to scale to modern, deep \nns\!\!.
            We use the Neumann inverse approximation, which was observed to be a stable alternative to CG in \nns~\citep{liao2018reviving, shaban2018truncated}.
            The stability is motivated by connections between the Neumann approximation and unrolled differentiation~\citep{shaban2018truncated}.
            Alternatively, we could use prior knowledge about the \nn structure to aid in the inversion---e.g., by using KFAC~\cite{martens2015optimizing}.
            It is possible to approximate the Hessian with the Gauss-Newton matrix or Fisher Information matrix~\citep{larsen1996design}.
            Various works use an identity approximation to the inverse, which is equivalent to \num{1}-step unrolled differentiation~\citep{luketina2016scalable, ren2018learning, balaji2018metareg, liu2018darts, finn2017model, shavitt2018regularization, nichol2018first}.
        
        \textbf{Unrolled differentiation for \ho\!\!.}
            A key difficulty in nested optimization is approximating how the optimized inner parameters (i.e., \nn weights) change with respect to the outer parameters (i.e., hyperparameters).
            We often optimize the inner parameters with gradient descent, so we can simply differentiate through this optimization.
            Differentiation through optimization has been applied to nested optimization problems by \cite{domke2012generic}, was scaled to \ho for \nns by \cite{maclaurin2015gradient}, and has been applied to various applications like learning optimizers~\cite{andrychowicz2016learning}.
            \cite{franceschi2018bilevel} provides convergence results for this class of algorithms, while \cite{franceschi2017forward} discusses forward- and reverse-mode variants.
            
            As the number of gradient steps we backpropagate through increases, so does the memory and computational cost.
            Often, gradient descent does not exactly minimize our objective after a finite number of steps---it only approaches a local minimum.
            Thus, to see how the hyperparameters affect the local minima, we may have to unroll the optimization infeasibly far.
            Unrolling a small number of steps can be crucial for performance but may induce bias~\cite{wu2018understanding}.
            \cite{shaban2018truncated} discusses connections between unrolling and the IFT, and proposes to unroll only the last $L$-steps.
            DrMAD~\cite{fu2016drmad} proposes an interpolation scheme to save memory.
        
        We compare hypergradient approximations in Table~\ref{tab:response_approxes}, and memory costs of gradient-based HO methods in Table~\ref{tab:hyper_methods}.
        We survey gradient-free \ho in Appendix~\ref{app:related_work}.
    
    \vspace{0.2cm}
    \section{Method}\label{sec:theory}
    \vspace{-0.2cm}
        In this section, we discuss how \ho is a uniquely challenging nested optimization problem and how to combine the benefits of the IFT and unrolled differentiation.
        
        \vspace{-0.1cm}
        \subsection{Hyperparameter Opt. is Pure-Response}
        \vspace{-0.1cm}
            Eq.~\ref{eqn:gradient-decomp} shows that the hypergradient decomposes into a \textit{\color{mydarkgreen}direct} and \textit{indirect gradient}.
            The bottleneck in hypergradient computation is usually finding the indirect gradient because we must take into account how the optimized parameters vary with respect to the hyperparameters.
            A simple optimization approach is to neglect the indirect gradient and only use the {\color{mydarkgreen}direct gradient}.
            This can be useful in zero-sum games like GANs~\cite{goodfellow2014generative} because they always have a non-zero direct term.
            
            However, using only the {\color{mydarkgreen}direct gradient} does not work in general games~\cite{balduzzi2018mechanics}.
            In particular, it does not work for \ho because the {\color{mydarkgreen}direct gradient} is identically \num{0} when the hyperparameters $\hp$ can only influence the validation loss by changing the optimized weights $\ep^*(\hp)$.
            For example, if we use regularization like weight decay when computing the training loss, but not the validation loss, then the {\color{mydarkgreen}direct gradient} is always \num{0}.
            
            If the {\color{mydarkgreen}direct gradient} is identically \num{0}, we call the game \textit{pure-response}.
            Pure-response games are uniquely difficult nested optimization problems for gradient-based optimization because we cannot use simple algorithms that rely on the {\color{mydarkgreen}direct gradient} like simultaneous SGD.
            Thus, we must approximate the indirect gradient.
        
        \newcommand{\smallSpace}{\hspace{0.03\textwidth}}
        \newcommand{\medSpace}{\hspace{0.09\textwidth}}
        \newcommand{\epA}{\ep_{\! \infty}}
        \vspace{-0.2cm}
        \subsection{Unrolled Optimization and the IFT}
        \vspace{-0.2cm}
            Here, we discuss the relationship between the IFT and differentiation through optimization.
            Specifically, we (1) introduce the recurrence relation that arises when we unroll SGD optimization, (2) give a formula for the derivative of the recurrence, and (3) establish conditions for the recurrence to converge.
            Notably, we show that the fixed points of the recurrence recover the IFT solution.
            We use these results to motivate a computationally tractable approximation scheme to the IFT solution.
            We give proofs of all results in Appendix~\ref{app:proofs}.
            
            Unrolling SGD optimization---given an initialization $\ep_0$---gives us the recurrence:
            \begin{equation}\label{eqn:sgd_recurrence}
                \ep_{\!i\!+\!1}\!(\hp) \!=\! T\!(\!\hp,\! \ep_{i}\!) \!=\! \ep_{\!i}(\hp) \!-\! \alpha \epderiv{\!\Ltr\!(\!\hp,\! \ep_{\!i}(\hp)\!)\!\!}
            \end{equation}
            In our exposition, assume that $\alpha=1$.
            We provide a formula for the derivative of the recurrence, to show that it converges to the IFT under some conditions.
            \begin{lemma*}\label{lemma:unrolled_grad}
                Given the recurrence from unrolling SGD optimization in Eq.~\ref{eqn:sgd_recurrence}, we have:
                \textnormal{
                    \begin{equation}
                        \!\hpderiv{\ep_{i+\!1\!}} = \! -\!\sum_{j\leq i} \!\! \left[ \! \prod_{k<j} \! I \! - \! \left. \trainHess{\ep_{i-k}(\hp)} \right|_{\!\hp, \ep_{i-\!k\!}(\hp)} \! \right] \!\! \left. \trainMixed{\ep_{i-\!j\!}(\hp)} \right|_{\!\hp, \ep_{i-\!j\!}(\hp)}
                    \end{equation}
                }
            \end{lemma*}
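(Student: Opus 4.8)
The plan is to differentiate the unrolled recurrence in Eq.~\ref{eqn:sgd_recurrence} with respect to $\hp$, observe that this produces a first-order linear matrix recurrence for $\hpderiv{\ep_i}$, and then solve that recurrence in closed form by induction on $i$.

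First I would apply the chain rule to $\ep_{i+1}(\hp) = \ep_i(\hp) - \trainGrad\big|_{\hp,\ep_i(\hp)}$ (taking $\alpha=1$ as assumed in the exposition). Viewing $\trainGrad$ as a vector-valued function of its two arguments, its total derivative with respect to $\hp$ along the curve $\hp \mapsto (\hp,\ep_i(\hp))$ splits into a direct piece, the mixed-partial block $\trainMixed{\ep_i}$ evaluated at $(\hp,\ep_i(\hp))$, and an indirect piece $\trainHess{\ep_i}\,\hpderiv{\ep_i}$ coming from the dependence of $\ep_i$ on $\hp$. Substituting into the differentiated recurrence gives
\begin{equation*}
\hpderiv{\ep_{i+1}} = \big(I - \trainHess{\ep_i}\big)\,\hpderiv{\ep_i} - \trainMixed{\ep_i},
\end{equation*}
where every second-derivative block is evaluated at $(\hp,\ep_i(\hp))$.

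Second, I would pin down the base case. Because the initialization $\ep_0$ is chosen without reference to $\hp$, we have $\hpderiv{\ep_0}=0$; this is exactly the hypothesis that lets the closed form in the Lemma contain no term carrying $\hpderiv{\ep_0}$, and I would state it explicitly. Writing $A_k \defeq I - \trainHess{\ep_k}$ and $C_k \defeq \trainMixed{\ep_k}$, the recurrence reads $\hpderiv{\ep_{i+1}} = A_i\,\hpderiv{\ep_i} - C_i$ with $\hpderiv{\ep_0}=0$; unrolling it (or arguing by induction on $i$) produces $\hpderiv{\ep_{i+1}} = -\sum_{j=0}^{i}\big[\prod_{k<j}A_{i-k}\big]C_{i-j}$, with the $j=0$ summand being the empty product $I$ times $C_i$. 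In the induction step one uses $A_i\big[\prod_{k<j}A_{i-1-k}\big] = \prod_{k<j+1}A_{i-k}$ to shift indices; re-expanding $A$ and $C$ back to $I-\trainHess{\ep_{i-k}}$ and $\trainMixed{\ep_{i-j}}$ recovers the statement verbatim.

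The computation is essentially bookkeeping, so I do not expect a genuine obstacle; the two points that need attention are (i) keeping the transpose conventions consistent, so that $\trainMixed{\ep}$ is the $\edim\times\hdim$ Jacobian of $\trainGrad$ with respect to $\hp$ and every product in the telescoping sum is conformable, and (ii) being explicit that the formula as written relies on $\hpderiv{\ep_0}=0$, since a general initialization would leave a residual term $\big[\prod_{k\le i}A_{i-k}\big]\hpderiv{\ep_0}$. I would also note in passing that restoring an arbitrary step size $\alpha$ merely replaces $A_k$ by $I-\alpha\trainHess{\ep_k}$ and $C_k$ by $\alpha\,\trainMixed{\ep_k}$, which is the form appearing in the unrolled-differentiation rows of Table~\ref{tab:response_approxes}.
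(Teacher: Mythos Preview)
Your proposal is correct and follows essentially the same route as the paper: differentiate the SGD recurrence, obtain the linear matrix recurrence $\hpderiv{\ep_{i+1}} = (I-\trainHess{\ep_i})\hpderiv{\ep_i} - \trainMixed{\ep_i}$, and then telescope it. The paper unrolls by repeatedly substituting the previous step, whereas you phrase the same computation as an induction with the shorthand $A_k,C_k$; your explicit statement of the base case $\hpderiv{\ep_0}=0$ is a useful clarification that the paper leaves implicit in its telescoping.
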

            This recurrence converges to a fixed point if the transition Jacobian $\epderiv{T}$ is contractive, by the Banach Fixed-Point Theorem~\citep{banach1922operations}.
            Theorem~\ref{thm:unrolled_to_IFT} shows that the recurrence converges to the IFT if we start at locally optimal weights $\ep_0 \!=\! \ep^*\!(\hparam)$, and the transition Jacobian $\epderiv{T}$ is contractive.
            We leverage that if an operator $U$ is contractive, then the Neumann series $\sum_{i = 0}^{\infty}U^k \!=\! (\text{Id} \!-\! U)^{-1}$.
            \begin{thm}[Neumann-SGD]
                Given the recurrence from unrolling SGD optimization in Eq.~\ref{eqn:sgd_recurrence}, if \textnormal{$\ep_0 = \ep^*(\hp)$}:
                \textnormal{
                    \begin{equation}
                        \hpderiv{\ep_{i+1}} = -\left. {\color{magenta}\left( \sum_{j < i} \left[I - \trainHess{\ep(\hp)}\right]^j \right)} \trainMixed{\ep(\hp)} \right|_{\ep^*(\hp)}
                    \end{equation}
                }
                and if \textnormal{$I + \trainHess{\ep(\hp)}$} is contractive:
                \textnormal{
                    \begin{equation}
                        \lim_{i\to\infty} \hpderiv{\ep_{i+1}} = -\left. {\color{magenta}\left[ \trainHess{\ep(\hp)} \right]^{-1}} \trainMixed{\ep(\hp)} \right|_{\ep^*(\hp)}
                    \end{equation}
                }
                \label{thm:unrolled_to_IFT}
            \end{thm}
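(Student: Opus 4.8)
The plan is to obtain both identities directly from the unrolled-gradient formula in the Lemma above, by exploiting a single structural observation: initializing the SGD recurrence of Eq.~\ref{eqn:sgd_recurrence} at $\ep_0 = \ep^*(\hp)$ makes it stationary. First I would note that $\ep^*(\hp)$ is a fixed point of the training gradient field, i.e.\ $\left.\trainGrad\right|_{\hp,\ep^*(\hp)} = 0$, so the update map satisfies $T(\hp,\ep^*(\hp)) = \ep^*(\hp) - \alpha\left.\epderiv{\Ltr}\right|_{\hp,\ep^*(\hp)} = \ep^*(\hp)$; by induction, $\ep_i(\hp) = \ep^*(\hp)$ for all $i \ge 0$. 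This is the crux of the argument, and it is short.

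Given stationarity, the next step is bookkeeping. Substituting $\ep_{i-k}(\hp) = \ep^*(\hp)$ into the Lemma's expression for $\hpderiv{\ep_{i+1}}$, every training Hessian along the trajectory equals $\left.\trainHess{\ep(\hp)}\right|_{\ep^*(\hp)}$ and every mixed-partials term equals $\left.\trainMixed{\ep(\hp)}\right|_{\ep^*(\hp)}$. Hence the telescoping product $\prod_{k<j}\bigl(I - \trainHess{\ep_{i-k}(\hp)}\bigr)$ collapses to the single power $\bigl[I - \trainHess{\ep(\hp)}\bigr]^j$, and factoring the common mixed-partials term out of the sum over $j$ gives the first claimed identity
\[
    \hpderiv{\ep_{i+1}} = -\left.\left(\sum_{j<i}\left[I - \trainHess{\ep(\hp)}\right]^j\right)\trainMixed{\ep(\hp)}\right|_{\ep^*(\hp)}.
\]
(I would also reconcile the summation range $j<i$ with the $j \le i$ appearing in the Lemma against the indexing convention used for $\ep_{i+1}$, but that is a cosmetic off-by-one.)

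For the limiting statement I would invoke the Neumann series. Writing $U \defeq I - \trainHess{\ep(\hp)}$---which, when $\alpha = 1$, is exactly the transition Jacobian $\epderiv{T}$ evaluated at the fixed point---the contraction hypothesis gives $\|U\| < 1$ in operator norm, so the partial sums $\sum_{j<i} U^j$ converge in operator norm to $(\mathrm{Id} - U)^{-1} = \bigl[\trainHess{\ep(\hp)}\bigr]^{-1}$. Since right-multiplication by the fixed matrix $\left.\trainMixed{\ep(\hp)}\right|_{\ep^*(\hp)}$ is continuous, the limit passes inside and yields
\[
    \lim_{i\to\infty}\hpderiv{\ep_{i+1}} = -\left.\left[\trainHess{\ep(\hp)}\right]^{-1}\trainMixed{\ep(\hp)}\right|_{\ep^*(\hp)},
\]
which is precisely the IFT expression of Theorem~\ref{thm:IFT}.

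The main obstacle here is conceptual rather than computational: recognizing that the right initialization to compare against is $\ep^*(\hp)$ itself, because that is what forces the unrolled trajectory to be constant and thereby turns a product of distinct Hessians along the trajectory into powers of a single Hessian---i.e.\ into a recognizable Neumann partial sum. After that, only a standard continuity argument and the Neumann-series identity remain; the one point requiring care is stating the contraction/invertibility condition precisely, since the operator whose powers appear---and which must be contractive both for Banach fixed-point convergence of Eq.~\ref{eqn:sgd_recurrence} and for the Neumann series---is $I - \trainHess{\ep(\hp)}$.
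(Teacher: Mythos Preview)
Your proposal is correct and follows essentially the same route as the paper: invoke the Lemma's unrolled-gradient formula, use stationarity $\ep_i(\hp)=\ep^*(\hp)$ (which the paper states tersely as ``$\ep_0=\ep^*(\hp)=\ep_i$'') to collapse the product of Hessians into a single power, and then sum the resulting Neumann series under the contraction hypothesis. Your remarks on the $j<i$ versus $j\le i$ indexing and on the contractive operator being $I-\trainHess{\ep(\hp)}$ rather than $I+\trainHess{\ep(\hp)}$ are both apt; the paper's appendix proof in fact uses $j\le i$ and applies the Neumann identity to $I-\trainHess{\ep(\hp)}$, so these are typographical inconsistencies in the statement rather than gaps in your argument.
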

            This result is also shown in~\cite{shaban2018truncated}, but they use a different approximation for computing the hypergradient---see Table~\ref{tab:response_approxes}.
            Instead, we use the following best-response Jacobian approximation, where $i$ controls the trade-off between computation and error bounds:
            \begin{align}\label{eqn:inv_approx}
                \hpderiv{\ep^*} 
                &\approx \left. -{\color{magenta}\left( \sum_{j < i} \left[I - 
                \alpha \trainHess{\ep(\hp)}\right]^j \right)} \trainMixed{\ep(\hp)} \right|_{\ep^*(\hp)}
            \end{align}
            \citet{shaban2018truncated} use an approximation that scales memory linearly in $i$, while ours is constant.
            We save memory because we reuse last $\ep$ $i$ times, while \citep{shaban2018truncated} needs the last $i$ $\ep$'s.
            Scaling the Hessian by the learning rate $\alpha$ is key for convergence.
            Our algorithm has the following main advantages relative to other approaches:
            \begin{itemize}
                \item It requires a constant amount of memory, unlike other unrolled differentiation methods~\cite{maclaurin2015gradient, shaban2018truncated}.
                \item It is more stable than conjugate gradient, like unrolled differentiation methods~\cite{liao2018reviving, shaban2018truncated}.
            \end{itemize}

        \subsection{Scope and Limitations}
            The assumptions necessary to apply the IFT are as follows:
            (1) $\Lval: \hdom \!\times\! \edom \!\to\! \Real$ is differentiable, 
            (2) $\Ltr: \hdom \!\times\! \edom \!\to\! \Real$ is twice differentiable with an invertible Hessian at $\response$, and
            (3) $\ep^*: \hdom \!\to\!\edom$ is differentiable.
            
            We need continuous hyperparameters to use gradient-based optimization, but many discrete hyperparameters (e.g., number of hidden units) have continuous relaxations~\citep{maddison2016concrete, jang2016categorical}.
            Also, we can only optimize hyperparameters that change the loss manifold, so our approach is not straightforwardly applicable to optimizer hyperparameters.

            To exactly compute hypergradients, we must find $(\hp',\!\ep')$ s.t.$\left. \smash{\trainGrad}\!\right|_{\hp'\!,\! \ep'}\!=\!0$, which we can only solve to a tolerance with an approximate solution denoted $\smash{\widehat{\ep^*}(\hp)}$.
            \cite{pedregosa2016hyperparameter} shows results for error in $\ep^*$ and the inversion.

    \vspace{-0.1cm}    
    \section{Experiments}\label{sec:experiments}
    \vspace{-0.3cm}
        We first compare the properties of Neumann inverse approximations and conjugate gradient, with experiments similar to \cite{liao2018reviving, maclaurin2015gradient, shaban2018truncated, pedregosa2016hyperparameter}.
        Then we demonstrate that our proposed approach can {overfit} the validation data with small training and validation sets.
        Finally, we apply our approach to high-dimensional \ho tasks: (1) {dataset distillation}; (2) learning a data augmentation network; and (3) tuning regularization parameters for an LSTM language model.
        
        \ho algorithms that are not based on implicit differentiation or differentiation through optimization---such as ~\citep{jaderberg2017population, jamieson2016non, bergstra2012random, kumar2018parallel, li2016hyperband, snoek2012practical}---do not scale to the high-dimensional hyperparameters we use.
        Thus, we cannot sensibly compare to them for high-dimensional problems.
        
        \vspace{-0.1cm}
        \subsection{Approximate Inversion Algorithms}
        \vspace{-0.1cm}
            In Fig.~\ref{fig:inverse_error} we investigate how close various approximations are to the true inverse.
            We calculate the distance between the approximate hypergradient and the true hypergradient.
            We can only do this for small-scale problems because we need the exact inverse for the true hypergradient.
            Thus, we use a linear network on the Boston housing dataset~\citep{harrison1978hedonic}, which makes finding the best-response $\ep^*$ and inverse training Hessian feasible.
            
            We measure the cosine similarity, which tells us how accurate the direction is, and the $\ell_2$ (Euclidean) distance between the approximate and true hypergradients.
            The Neumann approximation does better than CG in cosine similarity if we take enough \ho steps, while CG always does better for $\ell_2$ distance.

            In Fig.~\ref{fig:hessian_visual} we show the inverse Hessian for a fully-connected 1-layer \nn on the Boston housing dataset.
            The true inverse Hessian has a dominant diagonal, motivating identity approximations, while using more Neumann terms yields structure closer to the true inverse.
            \begin{figure}[ht!]
                \vspace{-0.02\textheight}
                \centering
                \begin{tikzpicture}
                    \centering
                    \node (img){\includegraphics[trim={.25cm .25cm 0.25cm .25cm},clip, width=.4625\linewidth]{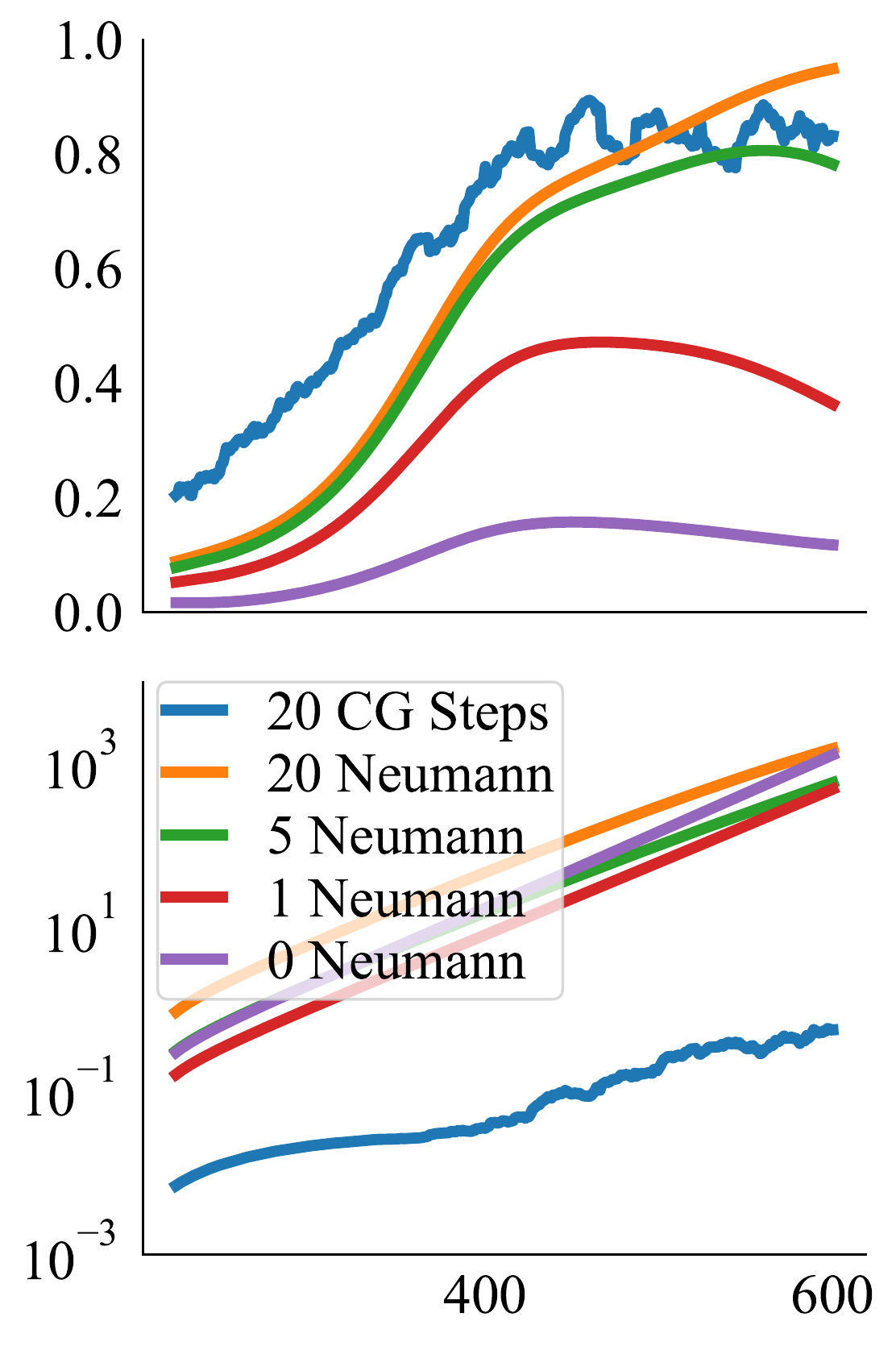}};
                    \node[left=of img, node distance=0cm, rotate=90, xshift=2.9cm, yshift=-.9cm, font=\color{black}] {Cosine Similarity};
                    \node[left=of img, node distance=0cm, rotate=90, xshift=-.35cm, yshift=-.9cm, font=\color{black}] {$\ell_2$ Distance};
                    \node[below=of img, node distance=0cm, yshift=1.2cm,font=\color{black}] {Optimization Iter.};
                    
                    \node (img2)[right=of img, xshift=-1.25cm]{\includegraphics[trim={.25cm .25cm 0.25cm .25cm},clip, width=.465\linewidth]{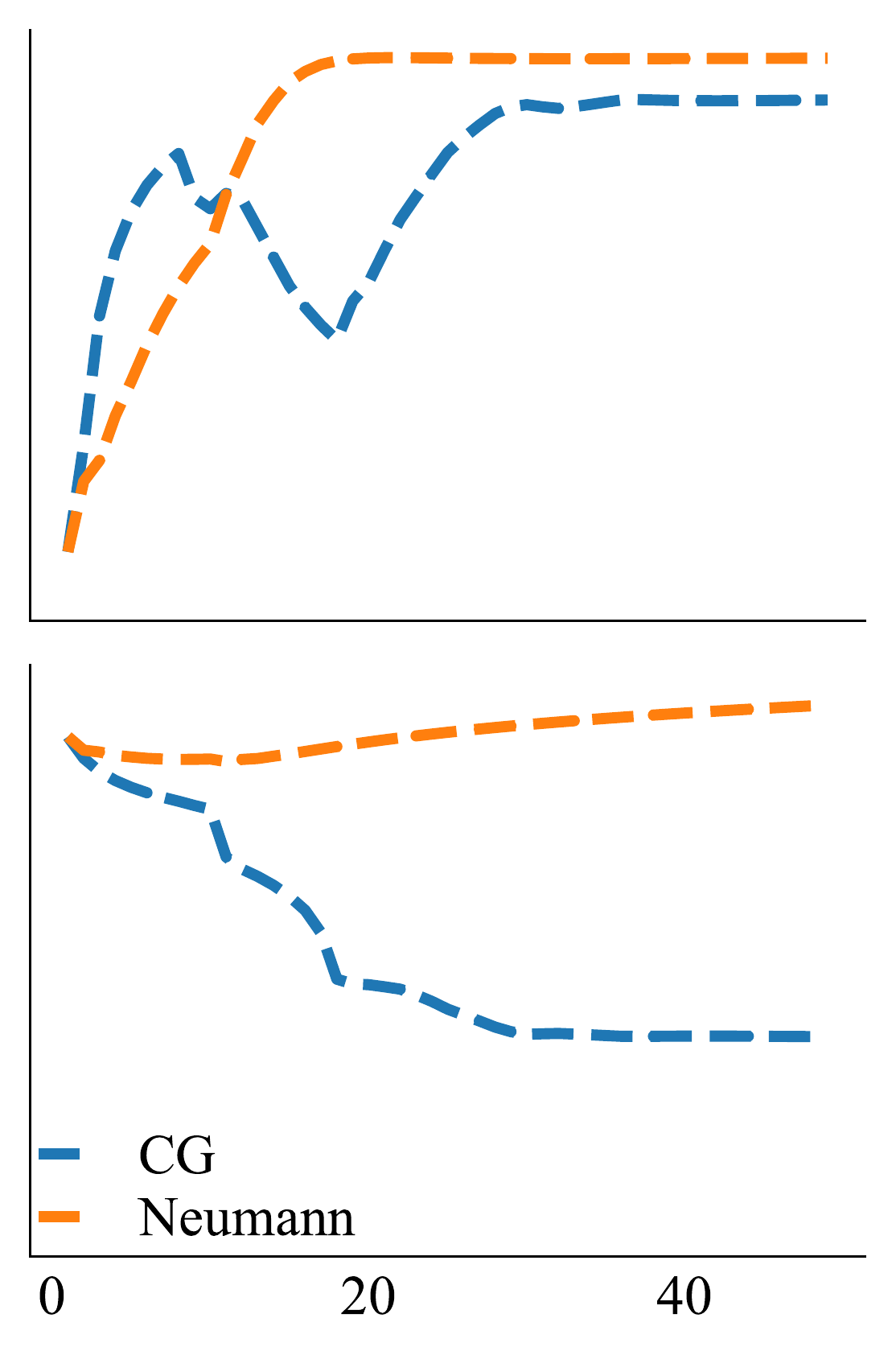}};
                    \node[below=of img2, node distance=0cm, yshift=1.2cm,font=\color{black}] {\# of Inversion Steps};
                \end{tikzpicture}
                \vspace{-0.035\textheight}
                \caption{
                    {Comparing approximate hypergradients for inverse Hessian approximations to true hypergradients.}
                    The Neumann scheme often has greater cosine similarity than CG, but larger $\ell_2$ distance for equal steps.
                }
                \label{fig:inverse_error}
                \vspace{-0.0125\textheight}
            \end{figure}
            \begin{figure}[ht!]
                \vspace{0.1cm}
                \centering
                \begin{tikzpicture}
                    \centering
                    \node[draw,inner sep=1pt] (img){\includegraphics[trim={3.5cm 1.4cm 4.2cm 2.0cm},clip, width=.285\linewidth]{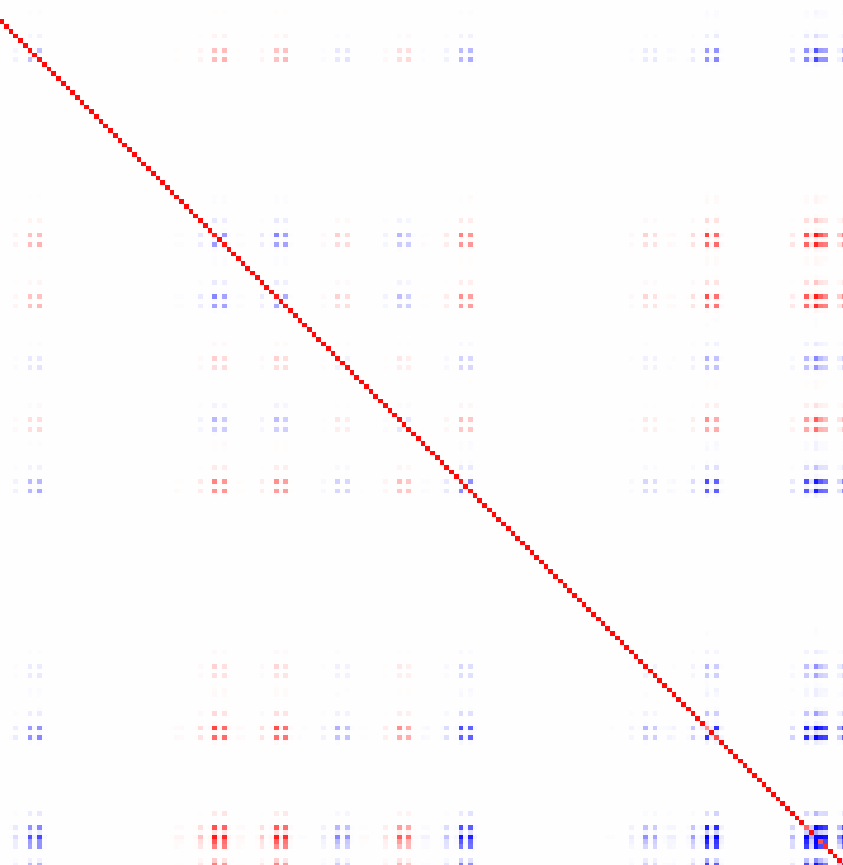}};
                    \node[below=of img, node distance=0cm, yshift=1.0cm, xshift=-.1cm,font=\color{black}] {1 Neumann};
                    
                    \node[draw,inner sep=1pt] (img2)[right=of img, xshift=-.98cm]{\includegraphics[trim={3.5cm 1.4cm 4.20cm 2.0cm},clip, width=.285\linewidth]{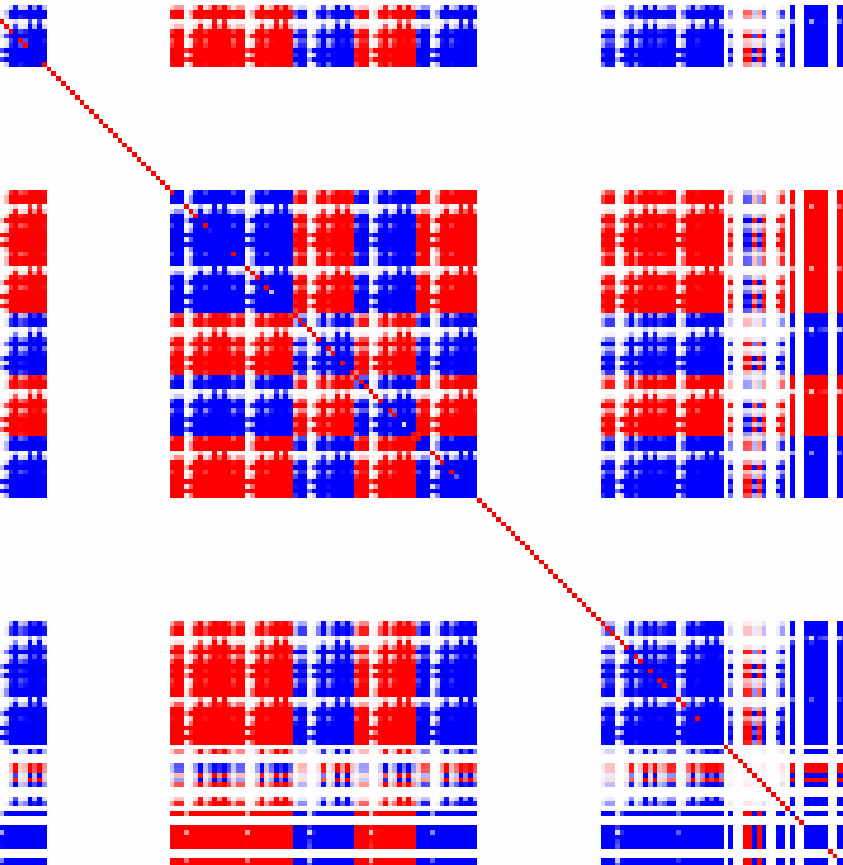}};
                    \node[below=of img2, node distance=0cm, yshift=1.cm,xshift=.1cm,font=\color{black}] {5 Neumann};
                    
                    \node[draw,inner sep=1pt] (img3)[right=of img2, xshift=-.98cm]{\includegraphics[trim={3.5cm 1.4cm 4.20cm 2.0cm},clip, width=.285\linewidth]{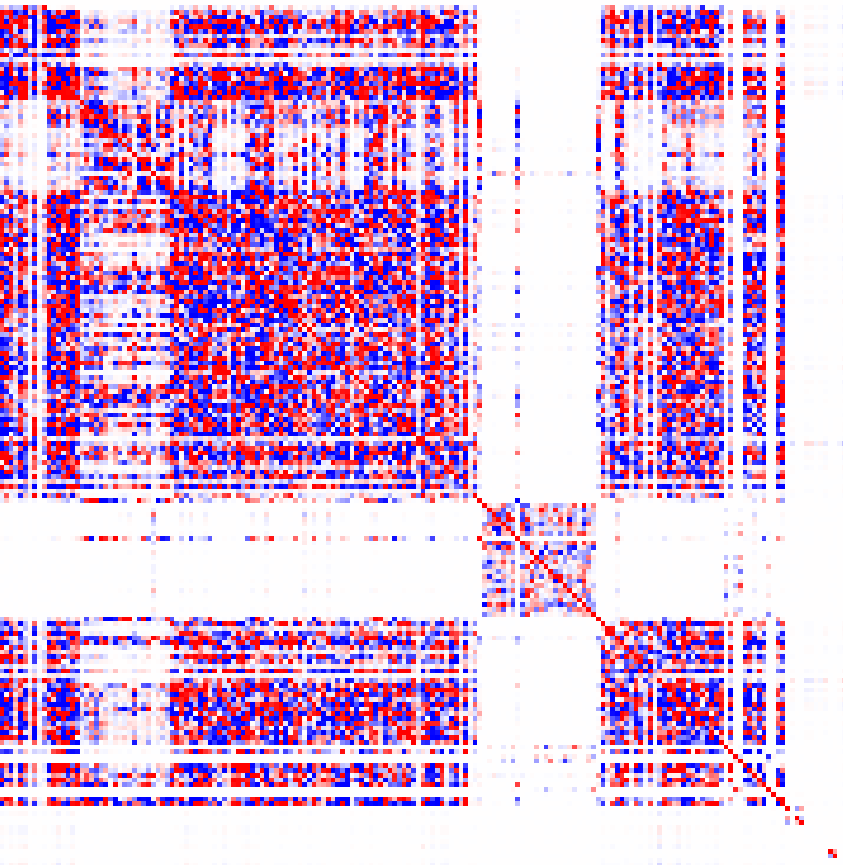}};
                    \node[below=of img3, node distance=0cm, yshift=1.cm,font=\color{black}] {True Inverse};
                    
                    \node (img4)[right=of img3, xshift=-1.1cm, yshift=-.025cm]{\includegraphics[trim={12.7cm 1.cm 1.05cm 1.25cm},clip, width=.082\linewidth]{hessian_true_inv.pdf}};
                \end{tikzpicture}
                \vspace{-0.035\textheight}
                \caption{
                    {Inverse Hessian approximations} {preprocessed by applying $\texttt{tanh}$} for a 1-layer, fully-connected \nn on the Boston housing dataset as in \cite{zhang2017noisy}.
                }
                \label{fig:hessian_visual}
            \end{figure}
        
        \newcommand{\numOverfitTrain}{\num{50}}
        \newcommand{\numOverfitValid}{\num{50}}
        \vspace{0.2cm}
        \subsection{Overfitting a Small Validation Set}
            In Fig.~\ref{fig:overfit_valid}, we check the capacity of our \ho algorithm to overfit the validation dataset.
            We use the same restricted dataset as in \cite{franceschi2017forward, franceschi2018bilevel} of $\numOverfitTrain$ training and validation examples, which allows us to assess \ho performance easily.
            We tune a separate weight decay hyperparameter for each \nn parameter as in \cite{balaji2018metareg, maclaurin2015gradient}.
            We show the performance with a linear classifier, AlexNet~\cite{krizhevsky2012imagenet}, and ResNet\num{44}~\cite{he2016deep}.
            For AlexNet, this yields more than $\num{50000000}$ hyperparameters, so we can perfectly classify our validation data by optimizing the hyperparameters.
            
            Algorithm~\ref{alg:joint_train} achieves \SI{100}{\percent} accuracy on the training and validation sets with significantly lower accuracy on the test set (Appendix~\ref{app:experiments}, Fig.~\ref{fig:overfit_valid_all}), showing that we have a powerful \ho algorithm.
            The same optimizer is used for weights and hyperparameters in all cases.
            %
            \begin{figure}[ht!]
                \vspace{-0.02\textheight}
                \centering
                \begin{tikzpicture}
                    \centering
                    \node (img){\includegraphics[trim={0.15cm 0cm 0.15cm 0cm},clip, width=.725\linewidth]{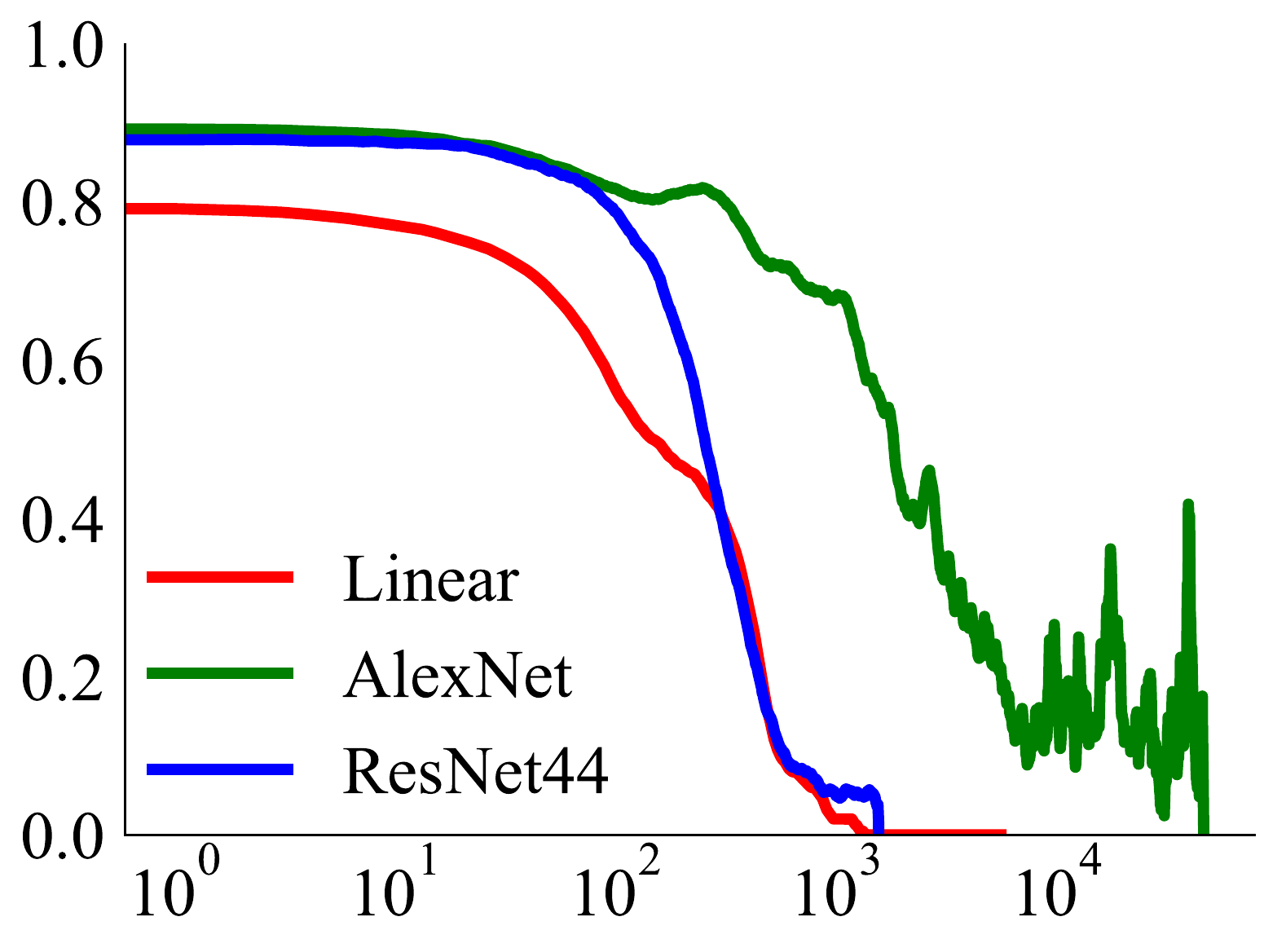}};
                    \node[left=of img, node distance=0cm, rotate=90, xshift=1.75cm, yshift=-.90cm, font=\color{black}] {Validation Error};
                    \node[below=of img, node distance=0cm, yshift=1.25cm,font=\color{black}] {Iteration};
                \end{tikzpicture}
                \vspace{-0.015\textheight}
                \caption{
                    {Algorithm~\ref{alg:joint_train} can overfit a small validation set on CIFAR-\num{10}.}
                    It optimizes for loss and achieves \SI{100}{\percent} validation accuracy for standard, large models. 
                }
                \label{fig:overfit_valid}
                \vspace{-0.015\textheight}
            \end{figure}
        
        \newcommand{\numDistillValid}{300}
        \subsection{Dataset Distillation}
            \begin{figure*}[h!]
                \vspace{-0.015\textheight}
                \centering
                \begin{tikzpicture}
                    \centering
                    \node (img){\includegraphics[trim={3.15cm 0cm 2.5cm 0cm},clip, width=.98\linewidth,height=1.5cm]{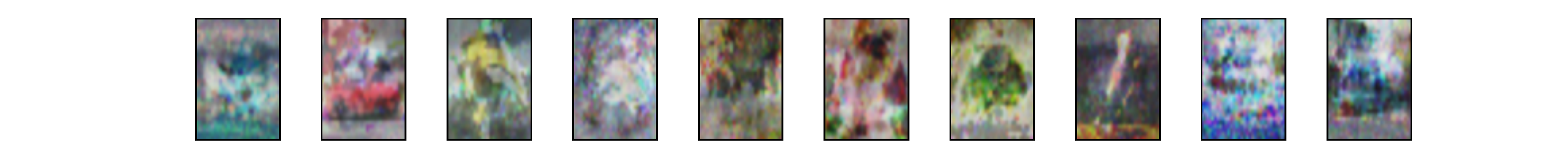}};
                    \node[above=of img, node distance=0cm, yshift=-.95cm,font=\color{black}] {{CIFAR-10 Distillation}};
                    \node(img_c1)[above=of img, node distance=0cm, yshift=-1.35cm, xshift=-7.8cm, font=\color{black}] {Plane};
                    \node(img_c2)[right=of img_c1, node distance=0cm, xshift=-.25cm, font=\color{black}] {Car};
                    \node(img_c3)[right=of img_c2, node distance=0cm, xshift=-.2cm, font=\color{black}] {Bird};
                    \node(img_c4)[right=of img_c3, node distance=0cm, xshift=-.15cm, font=\color{black}] {Cat};
                    \node(img_c5)[right=of img_c4, node distance=0cm, xshift=-.15cm, font=\color{black}] {Deer};
                    \node(img_c6)[right=of img_c5, node distance=0cm, xshift=-.15cm, font=\color{black}] {Dog};
                    \node(img_c7)[right=of img_c6, node distance=0cm, xshift=-.15cm, font=\color{black}] {Frog};
                    \node(img_c8)[right=of img_c7, node distance=0cm, xshift=-.25cm, font=\color{black}] {Horse};
                    \node(img_c9)[right=of img_c8, node distance=0cm, xshift=-.35cm, font=\color{black}] {Ship};
                    \node(img_c10)[right=of img_c9, node distance=0cm, xshift=-.3cm, font=\color{black}] {Truck};

                    \node (img2)[below=of img, yshift=.65cm]{\includegraphics[trim={3.15cm .5cm 2.5cm .5cm},clip, width=.98\linewidth,height=1.2cm]{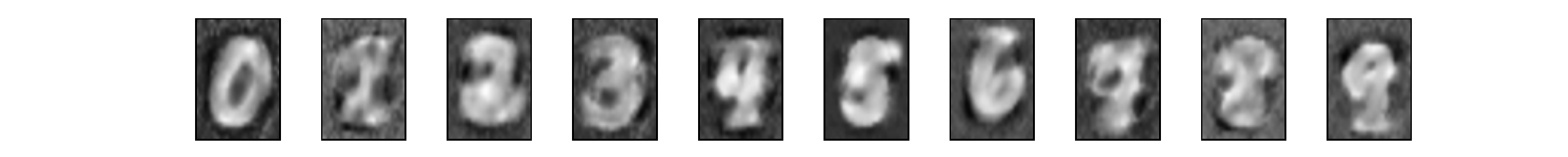}};
                    \node[above=of img2, node distance=0cm, yshift=-1.cm,font=\color{black}] {{MNIST Distillation}};
                    
                    \node (img3)[below=of img2, yshift=.2cm]{\includegraphics[trim={3.15cm 18.5cm 2.5cm 3cm},clip, width=.98\linewidth]{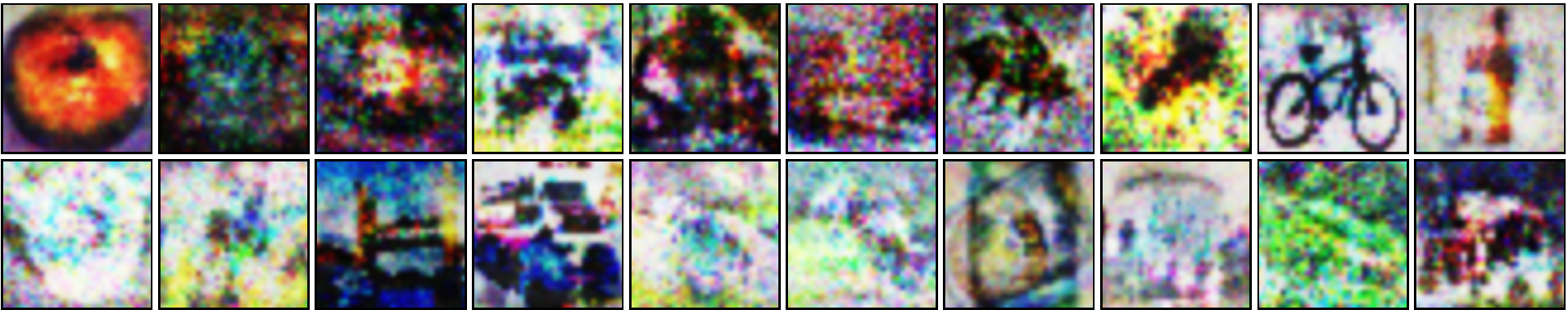}};
                    \node[above=of img3, node distance=0cm, yshift=-0.8cm,font=\color{black}] {{CIFAR-100 Distillation}};
                    \node(img_d1)[above=of img3, node distance=0cm, yshift=-1.15cm, xshift=-7.5cm, font=\color{black}] {Apple};
                    \node(img_d2)[right=of img_d1, node distance=0cm, xshift=-.3cm, font=\color{black}] {Fish};
                    \node(img_d3)[right=of img_d2, node distance=0cm, xshift=-.3cm, font=\color{black}] {Baby};
                    \node(img_d4)[right=of img_d3, node distance=0cm, xshift=-.4cm, font=\color{black}] {Bear};
                    \node(img_d5)[right=of img_d4, node distance=0cm, xshift=-.4cm, font=\color{black}] {Beaver};
                    \node(img_d6)[right=of img_d5, node distance=0cm, xshift=-.4cm, font=\color{black}] {Bed};
                    \node(img_d7)[right=of img_d6, node distance=0cm, xshift=-.25cm, font=\color{black}] {Bee};
                    \node(img_d8)[right=of img_d7, node distance=0cm, xshift=-.3cm, font=\color{black}] {Beetle};
                    \node(img_d9)[right=of img_d8, node distance=0cm, xshift=-.5cm, font=\color{black}] {Bicycle};
                    \node(img_d10)[right=of img_d9, node distance=0cm, xshift=-.65cm, font=\color{black}] {Bottle};
                    
                    \node(img_e1)[below=of img3, node distance=0cm, yshift=1.15cm, xshift=-7.5cm, font=\color{black}] {Bowl};
                    \node(img_e2)[right=of img_e1, node distance=0cm, xshift=-.25cm, font=\color{black}] {Boy};
                    \node(img_e3)[right=of img_e2, node distance=0cm, xshift=-.3cm, font=\color{black}] {Bridge};
                    \node(img_e4)[right=of img_e3, node distance=0cm, xshift=-.45cm, font=\color{black}] {Bus};
                    \node(img_e5)[right=of img_e4, node distance=0cm, xshift=-.5cm, font=\color{black}] {Butterfly};
                    \node(img_e6)[right=of img_e5, node distance=0cm, xshift=-.75cm, font=\color{black}] {Camel};
                    \node(img_e7)[right=of img_e6, node distance=0cm, xshift=-.45cm, font=\color{black}] {Can};
                    \node(img_e8)[right=of img_e7, node distance=0cm, xshift=-.35cm, font=\color{black}] {Castle};
                    \node(img_e9)[right=of img_e8, node distance=0cm, xshift=-.9cm, font=\color{black}] {Caterpillar};
                    \node(img_e10)[right=of img_e9, node distance=0cm, xshift=-.85cm, font=\color{black}] {Cattle};
                \end{tikzpicture}
                \vspace{-0.015\textheight}
                \caption{
                    {Distilled datasets for CIFAR-10, MNIST, and CIFAR-100.}
                    For CIFAR-100, we show the first 20 classes---the rest are in Appendix Fig.~\ref{app:cifar-100-distillation}.
                    We learn one distilled image per class, so after training a logistic regression classifier on the distillation, it generalizes to the rest of the data.
                }
                \label{fig:distilled_cifar}
            \end{figure*}
            Dataset distillation~\cite{maclaurin2015gradient, wang2018dataset} aims to learn a small, synthetic training dataset from scratch, that condenses the knowledge contained in the original full-sized training set.
            The goal is that a model trained on the synthetic data generalizes to the original validation and test sets.
            Distillation is an interesting benchmark for HO as it allows us to introduce tens of thousands of hyperparameters, and visually inspect what is learned: here, every pixel value in each synthetic training example is a hyperparameter.
            We distill MNIST and CIFAR-\num{10}/\num{100}~\cite{krizhevsky2009learning}, yielding $\num{28} \!\times\! \num{28} \!\times\! \num{10} = \num{7840}$, $\num{32} \!\times\! \num{32} \!\times\! \num{3} \!\times\! \num{10} = \num{30720}$, and $\num{32} \!\times\! \num{32} \!\times\! \num{3} \!\times\! \num{100} = \num{300720}$ hyperparameters, respectively.
            For these experiments, all labeled data are in our validation set, while our distilled data are in the training set.
            We visualize the distilled images for each class in Fig.~\ref{fig:distilled_cifar}, recovering recognizable digits for MNIST and reasonable color averages for CIFAR-\num{10}/\num{100}.
        
        \newcommand{\augmentFinalValAcc}{\SI{94}{\percent}}
        \newcommand{\augmentFinalTestAcc}{\SI{94}{\percent}}
        \newcommand{\numUnetParam}{\num{6659}}
        \vspace{-0.1cm}
        \subsection{Learned Data Augmentation}\label{sec:exp_data_augment}
        \vspace{-0.2cm}
            Data augmentation is a simple way to introduce invariances to a model---such as scale or contrast invariance---that improve generalization~\cite{cubuk2018autoaugment,xie2019unsupervised}.
            Taking advantage of the ability to optimize many hyperparameters, we learn data augmentation {from scratch} (Fig.~\ref{fig:augment}).
            
            Specifically, we learn a data augmentation network $\mathbf{\tilde{x}} = \mathbf{f}_{\hp}(\mathbf{x}, \epsilon)$ that takes a training example $\mathbf{x}$ and noise $\epsilon \sim \mathcal{N}(0, \mathbf{I})$, and outputs an augmented example $\mathbf{\tilde{x}}$.
            The noise $\epsilon$ allows us to learn {stochastic} augmentations.
            We parameterize $\mathbf{f}$ as a U-net~\citep{ronneberger2015u} with a residual connection from the input to the output, to make it easy to learn the identity mapping.
            The parameters of the U-net, $\hp$, are hyperparameters tuned for the validation loss---thus, we have $\numUnetParam$ hyperparameters.
            We trained a ResNet\num{18}~\cite{he2016deep} on CIFAR-10 with augmented examples produced by the U-net (that is simultaneously trained on the validation set).

            Results for the identity and Neumann inverse approximations are shown in Table~\ref{tab:neumann}.
            We omit CG because it performed no better than the identity.
            We found that using the data augmentation network improves validation and test accuracy by 2-3\%, and yields smaller variance between multiple random restarts.
            In \citep{mounsaveng2019adversarial}, a different augmentation network architecture is learned with adversarial training.
            \newcommand{\augWidth}{.113\textwidth}
            \newcommand{\augXShift}{-0.0675\textwidth}
            \newcommand{\augYShift}{0.0675\textwidth}
            \begin{figure}[h]
                \vspace{-0.1cm}
                \centering
                \begin{tikzpicture}
                    \centering
                    \node (img11){\includegraphics[trim={0cm 0cm 0cm 0cm},clip, width=\augWidth,height=1.8cm]{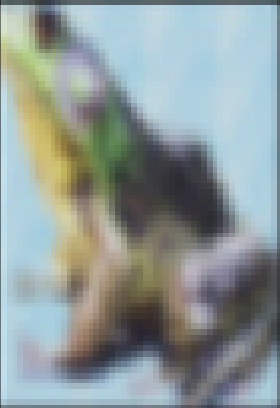}};
                    \node (img12)[right=of img11, xshift=\augXShift]{\includegraphics[trim={0cm 0cm 0cm 0cm},clip, width=\augWidth,height=1.8cm]{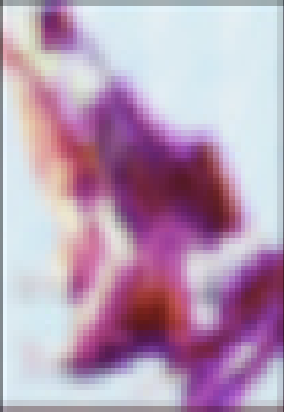}};
                    \node (img13)[right=of img12, xshift=\augXShift]{\includegraphics[trim={0cm 0cm 0cm 0cm},clip, width=\augWidth,height=1.8cm]{{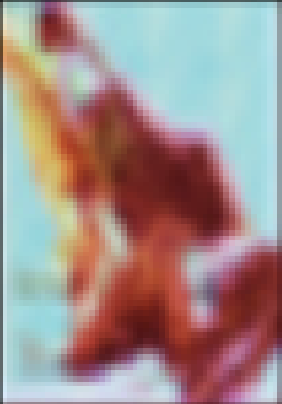}}};
                    \node (img14)[right=of img13, xshift=\augXShift]{\includegraphics[trim={0cm 0cm 0cm 0cm},clip, width=\augWidth,height=1.8cm]{{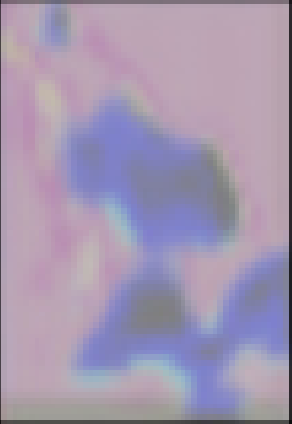}}};
                    
                    \node[above=of img11, node distance=0cm, yshift=-1.15cm,font=\color{black}] {{Original}};
                    \node[above=of img12, node distance=0cm, yshift=-1.15cm,font=\color{black}] {{Sample 1}};
                    \node[above=of img13, node distance=0cm, yshift=-1.15cm,font=\color{black}] {{Sample 2}};between
                    \node[above=of img14, node distance=0cm, yshift=-1.08cm,font=\color{black}] {{Pixel Std.}};
                    
                    \node (img21)[below=of img11, yshift=\augYShift]{\includegraphics[trim={0cm 0cm 0cm 0cm},clip, width=\augWidth,height=1.8cm]{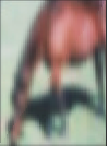}};
                    \node (img22)[right=of img21, xshift=\augXShift]{\includegraphics[trim={0cm 0cm 0cm 0cm},clip, width=\augWidth,height=1.8cm]{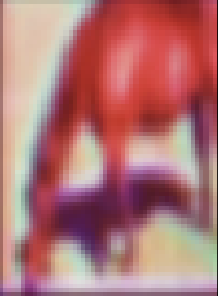}};
                    \node (img23)[right=of img22, xshift=\augXShift]{\includegraphics[trim={0cm 0cm 0cm 0cm},clip, width=\augWidth,height=1.8cm]{{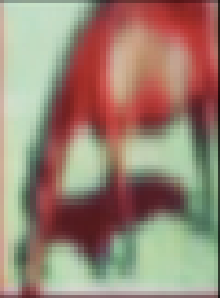}}};
                    \node (img24)[right=of img23, xshift=\augXShift]{\includegraphics[trim={0cm 0cm 0cm 0cm},clip, width=\augWidth,height=1.8cm]{{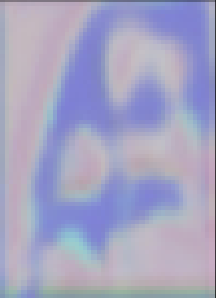}}};
                    
                    \node (img31)[below=of img21, yshift=\augYShift]{\includegraphics[trim={0cm 0cm 0cm 0cm},clip, width=\augWidth,height=1.8cm]{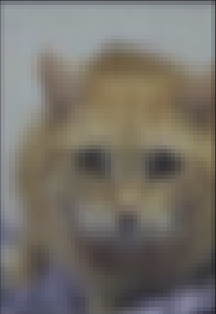}};
                    \node (img32)[right=of img31, xshift=\augXShift]{\includegraphics[trim={0cm 0cm 0cm 0cm},clip, width=\augWidth,height=1.8cm]{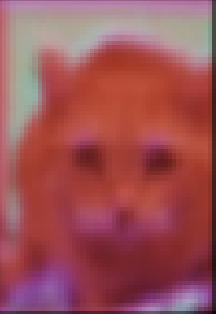}};
                    \node (img33)[right=of img32, xshift=\augXShift]{\includegraphics[trim={0cm 0cm 0cm 0cm},clip, width=\augWidth,height=1.8cm]{{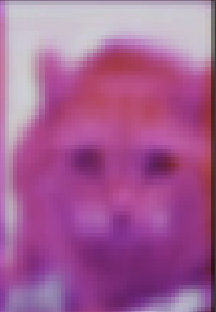}}};
                    \node (img34)[right=of img33, xshift=\augXShift]{\includegraphics[trim={0cm 0cm 0cm 0cm},clip, width=\augWidth,height=1.8cm]{{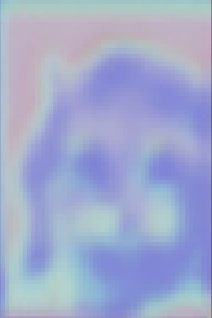}}};
                \end{tikzpicture}
                \vspace{-0.8cm}
                \caption{
                    {Learned data augmentations.}
                    The original image is on the left, followed by two augmented samples and the standard deviation of the pixel intensities from the augmentation distribution.
                }
                \label{fig:augment}
                \vspace{-0.15cm}
            \end{figure}
            
            \begin{table}[H]
            	\centering
            	\begin{tabular}{ccc}
            		{Inverse Approx.} & {Validation} & {Test}
            		\\
            		\midrule
            	    \num{0} 
            	        & 92.5 \rpm 0.021
            	        & 92.6 \rpm 0.017
            	        \\
            		\num{3} Neumann
            		    & \textbf{95.1} \rpm 0.002
            		    & 94.6 \rpm 0.001
            		    \\
            		\num{3} Unrolled Diff.
            		    & 95.0 \rpm 0.002
            		    & \textbf{94.7} \rpm 0.001 
            		    \\
            	    $I$
            		    & 94.6 \rpm 0.002
            		    & 94.1 \rpm 0.002
            		  \\
            	\end{tabular}
            	\vspace{-0.01\textheight}
                \caption{
                    {Accuracy of different inverse approximations.}
                    Using $\num{0}$ means that no \ho occurs, and the augmentation is initially the identity.
                    The Neumann approach performs similarly to unrolled differentiation~\citep{maclaurin2015gradient, shaban2018truncated} with equal steps and less memory.
                    Using more terms does better than the identity, and the identity performed better than CG (not shown), which was unstable.
                }
                \label{tab:neumann}
                \vspace{-0.015\textheight}
            \end{table}

    \vspace{0.8cm}
    \subsection{RNN Hyperparameter Optimization}
    \label{sec:exp_rnn}
        We also used our proposed algorithm to tune regularization hyperparameters for an LSTM~\cite{hochreiter1997long} trained on the Penn TreeBank (PTB) corpus~\cite{marcus1993building}.
        As in~\cite{gal2016theoretically}, we used a \num{2}-layer LSTM with \num{650} hidden units per layer and \num{650}-dimensional word embeddings.
        Additional details are provided in Appendix~\ref{app:exp_rnn}.
        
        \textbf{Overfitting Validation Data.}
            We first verify that our algorithm can overfit the validation set in a small-data setting with 10 training and 10 validation sequences (Fig.~\ref{fig:rnn-overfitting}).
            The LSTM architecture we use has \num{13280400} weights, and we tune a separate weight decay hyperparameter per weight.
            We overfit the validation set, reaching nearly 0 validation loss.
            \begin{figure}[h]
                \centering
                \vspace{-0.0cm}
                \begin{tikzpicture}
                    \centering
                    \node (img){\includegraphics[trim={0.7cm .8cm .55cm .55cm},clip, width=.6\linewidth]{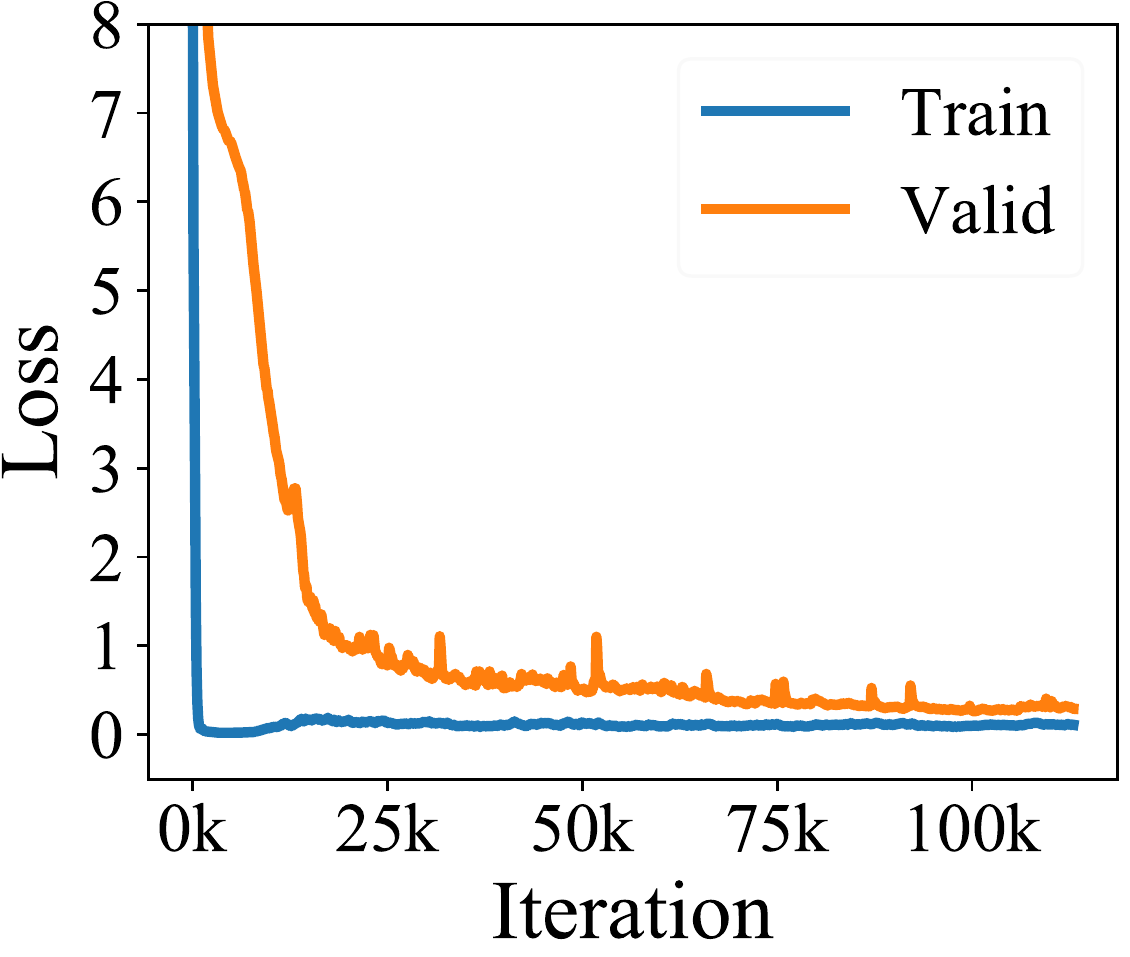}};
                    \node[left=of img, node distance=0cm, rotate=90, xshift=.5cm, yshift=-.90cm, font=\color{black}] {Loss};
                    \node[below=of img, node distance=0cm, yshift=1.25cm,font=\color{black}] {Iteration};
                \end{tikzpicture}
                \vspace{-0.2cm}
                \caption{
                    {Alg.~\ref{alg:joint_train} can overfit a small validation set with an LSTM on PTB.}
                }
                \label{fig:rnn-overfitting}
                \vspace{-0.05cm}
            \end{figure}

        \textbf{Large-Scale HO.}
            There are various forms of regularization used for training RNNs, including {variational dropout}~\citep{kingma2015variational} on the input, hidden state, and output; {embedding dropout} that sets rows of the embedding matrix to 0, removing tokens from all sequences in a mini-batch; DropConnect~\cite{wan2013regularization} on the hidden-to-hidden weights; and activation and temporal activation regularization.
            We tune these \num{7} hyperparameters simultaneously.
            Additionally, we experiment with tuning separate dropout/DropConnect rate for each activation/weight, giving \num{1691951} total hyperparameters.
            To allow for gradient-based optimization of dropout rates, we use concrete dropout~\cite{gal2017concrete}.
            
            Instead of using the small dropout initialization as in \cite{mackay2019self}, we use a larger initialization of $\num{0.5}$, which prevents early learning rate decay for our method.
            The results for our new initialization with no HO, our method tuning the same hyperparameters as \cite{mackay2019self} (``Ours''), and our method tuning many more hyperparameters (``Ours, Many'') are shown in Table~\ref{tab:rnn_compare}.
            We are able to tune hyperparameters more quickly and achieve better perplexities than the alternatives.
            \begin{table}[H]
                \vspace{-0.25cm}
            	\centering
            	\begin{tabular}{cccc}
            		\textbf{Method} & \textbf{Validation} & \textbf{Test} & \textbf{Time(s)}\\
            		\midrule
            	    Grid Search
            	        & 97.32
            	        & 94.58
            	        & 100k\\
            		Random Search
            		    & 84.81
            		    & 81.46
            		    & 100k\\
            	    Bayesian Opt.
            		    & 72.13
            		    & 69.29
            		    & 100k\\
            		STN
            		    & 70.30
            		    & 67.68
            		    & 25k\\
            		\hline
            		\textbf{No \ho}
            		    & 75.72
        		        & 71.91
        		        & 18.5k\\
        		    \textbf{Ours}
        		        & 69.22
        		        & 66.40
        		        & 18.5k\\
            		\textbf{Ours, Many}
            		    & \textbf{68.18}
            		    & \textbf{66.14}
            		    & 18.5k\\
                	\end{tabular}
                	\vspace{-0.345cm}
                    \caption{
                        {Comparing \HO methods for LSTM training on PTB.}
                        We tune millions of hyperparameters faster and with comparable memory to competitors tuning a handful.
                        Our method competitively optimizes the same \num{7} hyperparameters as baselines from \cite{mackay2019self} (first four rows).
                        We show a performance boost by tuning millions of hyperparameters, introduced with per-unit/weight dropout and DropConnect.
                        ``No HO'' shows how the hyperparameter initialization affects training.
                    }
                    \vspace{-0.5cm}
                    \label{tab:rnn_compare}
                \end{table}
    
    \subsection{Effects of Many Hyperparameters}
        \vspace{-0.15cm}
        Given the ability to tune high-dimensional hyperparameters and the potential risk of overfitting to the validation set, should we reconsider how our training and validation splits are structured?
        Do the same heuristics apply as for low-dimensional hyperparameters (e.g., use $\sim 10\%$ of the data for validation)?

        In Fig.~\ref{fig:valid_prop_comparison} we see how splitting our data into training and validation sets of different ratios affects test performance.
        We show the results of jointly optimizing the \nn weights and hyperparameters, as well as the results of fixing the final optimized hyperparameters and re-training the \nn weights from scratch, which is a common technique for boosting performance~\citep{Goodfellow-et-al-2016}.
        
        We evaluate a high-dimensional regime with a separate weight decay hyperparameter per \nn parameter, and a low-dimensional regime with a single, global weight decay.
        We observe that: 
        (1) for few hyperparameters, the optimal combination of validation data and hyperparameters has similar test performance with and without re-training, because the optimal amount of validation data is small; and
        (2) for many hyperparameters, the optimal combination of validation data and hyperparameters is significantly affected by re-training, because the optimal amount of validation data needs to be large to fit our hyperparameters effectively.
        
        For few hyperparameters, our results agree with the standard practice of using \num{10}\% of the data for validation and the other \num{90}\% for training.
        For many hyperparameters, our results show that we should use larger validation partitions for \ho\!\!.
        If we use a large validation partition to fit the hyperparameters, it is critical to re-train our model with all of the data.

        \begin{figure}[h!]
            \vspace{-0.125cm}
            \begin{tikzpicture}
                \centering
                \node (img){\includegraphics[trim={1.25cm 1.35cm .6cm 1.3cm},clip, width=.475\linewidth]{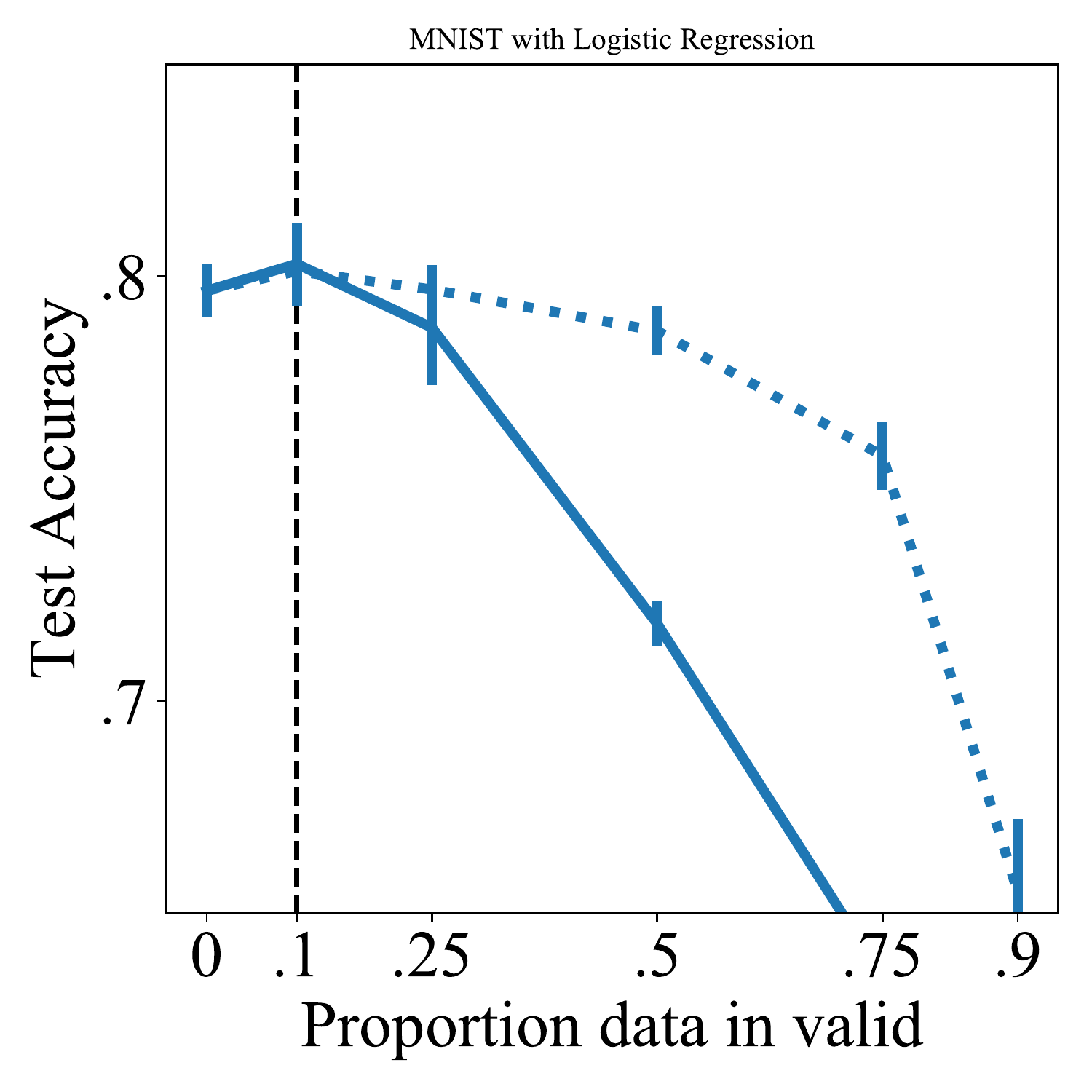}};
                \node[left=of img, node distance=0cm, rotate=90, xshift=1.5cm, yshift=-.95cm, font=\color{black}] {Test Accuracy};
                \node[above=of img, node distance=0cm, yshift=-1.15cm, xshift=.2cm, font=\color{black}] {{Without re-training}};
                \node[below=of img, node distance=0cm, yshift=1.25cm,font=\color{black}] {\% Data in Validation};
                
                \node (img2)[right=of img, xshift=-1.2cm]{\includegraphics[trim={2.3cm 1.35cm .6cm 1.3cm},clip, width=.44\linewidth]{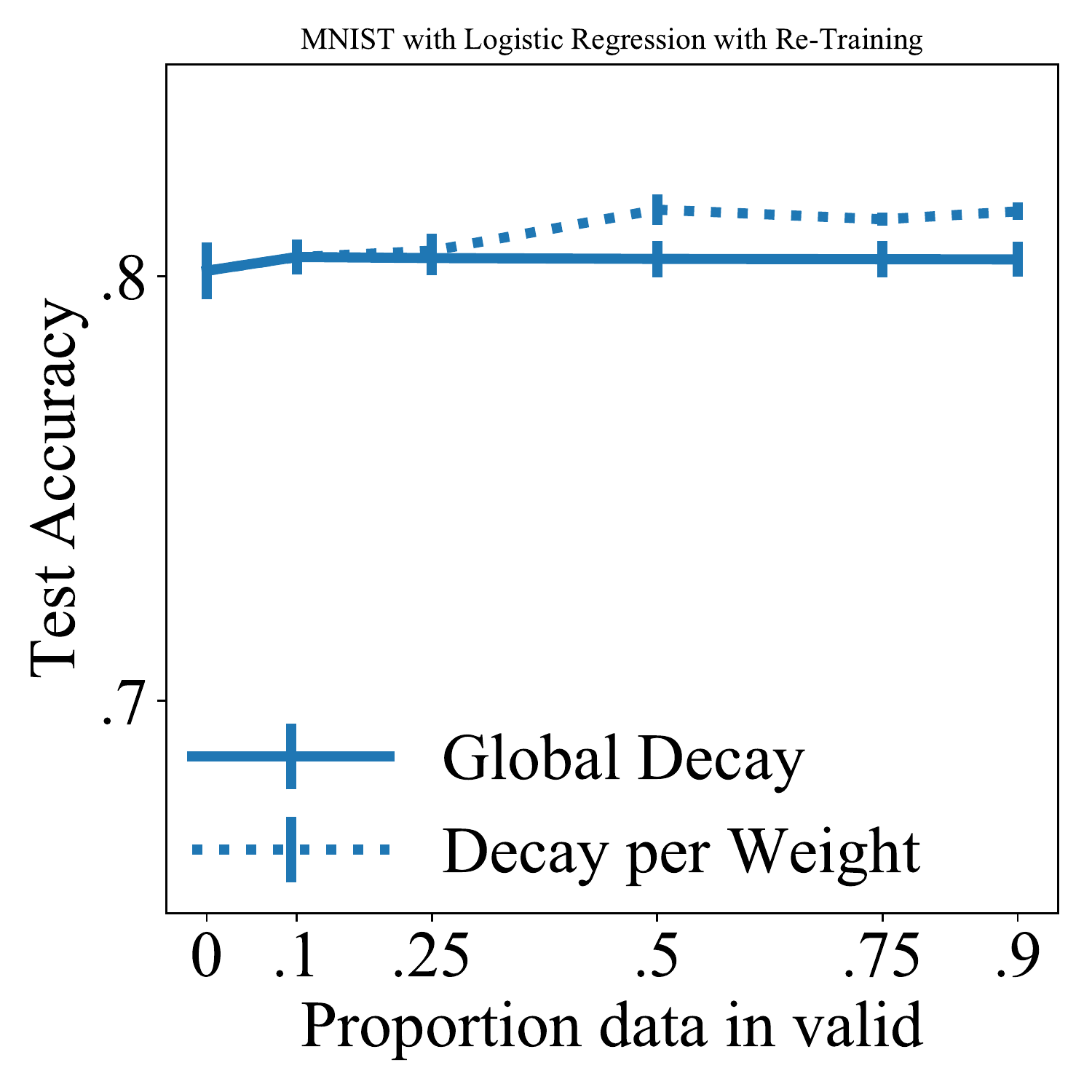}};
                \node[above=of img2, node distance=0cm, yshift=-1.15cm,font=\color{black}] {{With re-training}};
                \node[below=of img2, node distance=0cm, yshift=1.25cm,font=\color{black}] {\% Data in Validation};
            \end{tikzpicture}
            \vspace{-0.3cm}
            \caption{
                {Test accuracy of logistic regression on MNIST, with different size validation splits.}
                Solid lines correspond to a single global weight decay ($\num{1}$ hyperparameter), while dotted lines correspond to a separate weight decay per weight (many hyperparameters).
                The best validation proportion for test performance is different after re-training for many hyperparameters, but similar for few hyperparameters.
            }
            \label{fig:valid_prop_comparison}
            \vspace{-0.025\textheight}
        \end{figure}

    \vspace{-0.1cm}
    \section{Conclusion}
    \vspace{-0.225cm}
        We present a gradient-based hyperparameter optimization algorithm that scales to high-dimensional hyperparameters for modern, deep \nns\!\!.
        We use the implicit function theorem to formulate the hypergradient as a matrix equation, whose bottleneck is inverting the Hessian of the training loss with respect to the \nn parameters.
        We scale the hypergradient computation to large \nns by approximately inverting the Hessian, leveraging a relationship with unrolled differentiation.
        
        We believe algorithms of this nature provide a path for practical nested optimization, where we have Hessians with known structure.
        Examples of this include GANs~\cite{goodfellow2014generative}, and other multi-agent games~\cite{foerster2018learning, letcher2018stable}.
        
    \section*{Acknowledgements}
        We thank Chris Pal for recommending we investigate re-training with all the data, Haoping Xu for discussing related experiments on inverse-approximation variants, Roger Grosse for guidance, and Cem Anil \& Chris Cremer for their feedback on the paper.
        Paul Vicol was supported by a JP Morgan AI Fellowship.
        We also thank everyone else at Vector for helpful discussions and feedback.
    
    \bibliography{references}
    \clearpage
    \newpage

    \appendix
        \twocolumn[
            \aistatstitle{Optimizing Millions of Hyperparameters by Implicit Differentiation\\
            Appendix}
            \vspace{-.75cm}
        ]
        
        \section{Extended Background}\label{app:background}
            In this section we provide an outline of our notation (Table~\ref{tab:TableOfNotation}), and the proposed algorithm.
            Here, we assume we have access to to a finite dataset $\dataset = \{(\vx_i, \vy_i) | i = 1 \dots n \}$, with $n$ examples drawn from the distribution $p(\vx, \vy)$ with support $\mathcal{P}$.
            We denote the input and target domains by $\mathcal{X}$ and $\mathcal{Y}$, respectively.
            Assume $\vy: \xdom \to \tdom$ is a function and we wish to learn $\hat{\vy}:\xdom \times \edom \to \tdom$ with a \nn parameterized by $\ep \in \edom$, s.t. $\hat{\vy}$ is close to $\vy$.
            We measure how close a predicted value is to a target with the prediction loss $\loss: \tdom \times \tdom \to \Real$.
            Our goal is to minimize the expected prediction loss or population risk: $\argmin_{\ep} \mathbb{E}_{\vx \sim p(\vx)}[\loss(\hat{\vy}(\vx, \ep), \vy(\vx))]$.
            Since we only have access to a finite number of samples, we minimize the empirical risk: $\argmin_{\ep} \nicefrac{1}{n}\sum_{\vx, \vy \in \dataset}\loss(\hat{\vy}(\vx, \ep), \vy(\vx))$.
            
            Due to a limited size dataset $\dataset$, there may be a significant difference between the minimizer of the empirical risk and the population risk.
            We can estimate this difference by partitioning our dataset into {training} and {validation} datasets--- $\dataset_{train}, \dataset_{valid}$.
            We find the minimizer over the training dataset $\dataset_{train}$, and estimate its performance on the population risk by evaluating the empirical risk over the validation dataset $\dataset_{valid}$.
            We introduce modifications to the empirical training risk to decrease our population risk, parameterized by $\hp \in \hdom$.
            These parameters for generalization are called the hyperparameters.
            We call the modified empirical training risk our training loss for simplicity and denote it $\Ltr (\hp, \ep)$.
            Our validation empirical risk is called validation loss for simplicity and denoted by $\Lval (\hp, \ep)$.
            Often the validation loss does not directly depend on the hyperparameters, and we just have $\Lval (\ep)$.

            The population risk is estimated by plugging the training loss minimizer $\response = \argmin_{\ep} \Ltr (\hp, \ep)$ into the validation loss for the estimated population risk $\Lval^*(\hp) = \LvalResponse$.
            We want our hyperparameters to minimize the estimated population risk: $\hp^* = \argmin_{\hp} \Lval^*(\hp)$.
            We can create a third partition of our dataset $\dataset_{test}$ to assess if we have overfit the validation dataset $\dataset_{valid}$ with our hyperparameters $\hp$.
            
        \section{Extended Related Work}
        \label{app:related_work}
            \textbf{Independent \ho\!\!:}
                A simple class of \ho algorithms involve making a number of independent hyperparameter selections, and training the model to completion on them.
                Popular examples include grid search and random search~\citep{bergstra2012random}.
                Since each hyperparameter selection is independent, these algorithms are trivial to parallelize.

            \textbf{Global \ho\!\!:}
                Some \ho algorithms attempt to find a globally optimal hyperparameter setting, which can be important if the loss is non-convex.
                A simple example is random search, while a more sophisticated example is Bayesian optimization~\cite{movckus1975bayesian, snoek2012practical, kandasamy2019tuning}.
                These HO algorithms often involve re-initializing the hyperparameter and weights on each optimization iteration.
                This allows global optimization, at the cost of expensive re-training weights or hyperparameters.

            \textbf{Local \ho\!\!:}
                Other \ho algorithms only attempt to find a locally optimal hyperparameter setting.
                Often these algorithms will maintain a current estimate of the best combination of hyperparameter and weights.
                On each optimization iteration, the hyperparameter is adjusted by a small amount, which allows us to avoid excessive re-training of the weights on each update.
                This is because the new optimal weights are near the old optimal weights due to a small change in the hyperparameters.
            
            \textbf{Learned proxy function based \ho\!\!:}
                Many \ho algorithms attempt to learn a proxy function for optimization.
                The proxy function is used to estimate the loss for a hyperparameter selection.
                We could learn a proxy function for global or local \ho.
                We can learn a useful proxy function over any node in our computational graph including the optimized weights.
                For example, we could learn how the optimized weights change w.r.t. the hyperparameters ~\cite{lorraine2018stochastic}, how the optimized predictions change w.r.t. the hyperparameters~\cite{mackay2019self}, or how the optimized validation loss changes w.r.t. the hyperparameters as in Bayesian Optimization.
                It is possible to do gradient descent on the proxy function to find new hyperparameters to query as in Bayesian optimization.
                Alternatively, we could use a non-differentiable proxy function to get cheap estimates of the validation loss like SMASH~\cite{brock2017smash} for architecture choices.
                
         \begin{table*}[h]\caption{\textbf{Notation}}
            \begin{center}
                \begin{tabular}{c | c}
                    \toprule
                    \HO & Hyperparameter optimization\\
                    \NN & Neural network\\
                    IFT & Implicit Function Theorem\\
                    HVP / JVP & Hessian/Jacobian-vector product\\
                    $\hp, \ep$ & Hyperparameters and \nn parameters/weights\\
                    $\hdim, \edim$ & Hyperparameter and \nn parameter dimensionality\\
                    $\hdom \! \subseteq \! \Real^\hdim,\! \edom \! \subseteq \! \Real^\edim$ & Hyperparameters and \nn parameter domains\\
                    $\hp', \ep'$ & Arbitrary, fixed hyperparameters and weights\\
                    $\Ltr\!(\hp,\! \ep),\!\Lval\!(\hp,\! \ep)\!$& Training loss \& validation loss\\
                    ${\color{blue}\response}$ & Best-response of the weights to the hyperparameters\\
                    ${\color{blue}\widehat{\ep^*}(\hp)}$ & An approximate best-response of the weights to the hyperparameters\\
                    ${\color{red}\Lval^{*}\!(\hp\!)\!} = {\color{red}\Lval\!(\hp,\! \response\!)\!}$& The validation loss with best-responding weights\\
                    {\color{red}Red}& (Approximations to) The validation loss with best-responding weights\\
                    $\edom^* = \ep^*(\hdom)$ & The domain of best-responding weights\\
                    $\hp^*$ & The optimal hyperparameters\\
                    $\vx, \vy$ & An input and its associated target\\
                    $\xdom, \tdom$ & The input and target domains respectively\\
                    $\dataset$ & A data matrix consisting of tuples of inputs and targets\\
                    $\vy(\vx, \ep)$ & A predicted target for a input data and weights\\
                    ${\color{mydarkgreen}\hpderiv{\Lval}}, \epderiv{\Lval}$ & The (validation loss hyperparameter / parameter) direct gradient\\
                    {\color{mydarkgreen}Green} & (Approximations to) The validation loss direct gradient.\\
                    ${\color{blue}\hpderiv{\ep^*}}$ & The best-response Jacobian\\
                    {\color{blue}Blue} & (Approximations to) The (Jacobian of the) best-response of the weights\\
                    & to the hyperparameters\\
                    $\pd{\Lval}{\ep} {\color{blue}\hpderiv{\ep^*}}$\!& The indirect gradient\\
                    $\hpderiv{{\color{red}\Lval^*}}$& A hypergradient: sum of validation losses direct and indirect gradient\!\\
                    ${\color{magenta}\left[ \trainHess{\ep} \right]^{-1}}$ & The training Hessian inverse\\
                    {\color{magenta}Magenta} & (Approximations to) The training Hessian inverse\\
                    ${\color{orange} \epderiv{\Lval} \left[ \trainHess{\ep} \right]^{-1}}$ & The vector - Inverse Hessian product.\\
                    {\color{orange}Orange} & (Approximations to) The vector - Inverse Hessian product.\\
                    $\trainMixed{\ep}$ & The training mixed partial derivatives\\
                    $I$ & The identity matrix\\
                    \bottomrule
                \end{tabular}
            \end{center}
            \label{tab:TableOfNotation}
        \end{table*}
        
        \newcommand{\iftf}{\trainGrad}
        \newcommand{\iftx}{\hparam}
        \newcommand{\ifty}{\ep}
        \newcommand{\iftg}{\ifty^*}
        \newcommand{\ifthdom}{\hdom}
        \newcommand{\iftedom}{\edom}
        \newcommand{\ifta}{\iftx'}
        \newcommand{\iftb}{\ifty'}
        \onecolumn
        \section{Implicit Function Theorem}
            \vspace{-0.2cm}
            \begin{thm*}[Augustin-Louis Cauchy, Implicit Function Theorem]\label{app:IFTComplete}
                \textnormal{
                    Let $\iftf(\iftx, \ifty): \ifthdom \times \iftedom \to \iftedom$ be a continuously differentiable function.
                    Fix a point $(\ifta, \iftb)$ with $\iftf(\ifta, \iftb) = 0$.
                    If the Jacobian $J^{\iftf}_{\ifty}(\ifta, \iftb)$ is invertible, there exists an open set $U \subseteq \ifthdom$ containing $\ifta$ s.t. there exists a continuously differentiable function $\iftg: U \to \iftedom$ s.t.:
                    \begin{equation*}
                        \iftg(\ifta) = \iftb \textnormal{ and } \forall \iftx \in U, \iftf(\iftx, \iftg(\iftx))) = 0
                    \end{equation*}
                    Moreover, the partial derivatives of $\iftg$ in $U$ are given by the matrix product:
                    \begin{equation*}
                        \pd{\iftg}{\iftx} (\iftx) = -\left[ J^{\iftf}_{\ifty}(\iftx, \iftg(\iftx)) \right]^{-1} J^{\iftf}_{\iftx} (\iftx, \iftg(\iftx)))
                    \end{equation*}
                }
            \end{thm*}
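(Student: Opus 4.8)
The plan is to reduce the statement to the Inverse Function Theorem via the classical ``graph trick,'' and then read off the derivative formula by implicitly differentiating the defining identity. First I would introduce the auxiliary map $F : \ifthdom \times \iftedom \to \ifthdom \times \iftedom$, $F(\iftx, \ifty) \defeq (\iftx, \iftf(\iftx, \ifty))$. Since $\iftf$ is continuously differentiable, so is $F$, and its Jacobian at $(\ifta, \iftb)$ is block lower-triangular with diagonal blocks $I$ and $J^{\iftf}_{\ifty}(\ifta, \iftb)$; hence its determinant equals $\det J^{\iftf}_{\ifty}(\ifta, \iftb) \neq 0$ by hypothesis. By the Inverse Function Theorem, $F$ restricts to a $C^1$ diffeomorphism from a neighborhood of $(\ifta, \iftb)$ onto a neighborhood of $F(\ifta, \iftb) = (\ifta, 0)$, with $C^1$ inverse $G$.

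Next I would extract $\iftg$ from $G$. Because $F$ fixes the first coordinate, so does $G$, so we may write $G(\iftx, z) = (\iftx, h(\iftx, z))$ for a $C^1$ map $h$; substituting into $F \circ G = \mathrm{id}$ gives $\iftf(\iftx, h(\iftx, z)) = z$. Setting $\iftg(\iftx) \defeq h(\iftx, 0)$ on the open set $U \ni \ifta$ obtained by shrinking so that $(\iftx, 0)$ stays in the domain of $G$, we obtain a $C^1$ function with $\iftg(\ifta) = \iftb$ (since $G(\ifta, 0) = (\ifta, \iftb)$) and $\iftf(\iftx, \iftg(\iftx)) = 0$ for all $\iftx \in U$, which is the first assertion.

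For the derivative, I would differentiate the identity $\iftf(\iftx, \iftg(\iftx)) = 0$ in $\iftx$ using the chain rule, obtaining $J^{\iftf}_{\iftx}(\iftx, \iftg(\iftx)) + J^{\iftf}_{\ifty}(\iftx, \iftg(\iftx)) \, \pd{\iftg}{\iftx}(\iftx) = 0$. Continuity of the derivative of $\iftf$ together with invertibility at $(\ifta, \iftb)$ ensures $J^{\iftf}_{\ifty}$ stays invertible on a possibly smaller neighborhood, so left-multiplying by its inverse yields $\pd{\iftg}{\iftx}(\iftx) = -\left[ J^{\iftf}_{\ifty}(\iftx, \iftg(\iftx)) \right]^{-1} J^{\iftf}_{\iftx}(\iftx, \iftg(\iftx))$, as claimed.

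The real work is packaged inside the appeal to the Inverse Function Theorem, so I would regard that as the main obstacle; for a self-contained argument I would instead run a contraction-mapping proof directly. For $\iftx$ near $\ifta$, the map $\ifty \mapsto \ifty - \left[ J^{\iftf}_{\ifty}(\ifta, \iftb) \right]^{-1} \iftf(\iftx, \ifty)$ is a contraction on a small closed ball around $\iftb$ (using uniform continuity of the derivative of $\iftf$), so Banach's fixed-point theorem produces a unique fixed point $\iftg(\iftx)$ solving $\iftf(\iftx, \iftg(\iftx)) = 0$, and a routine estimate shows $\iftg$ is continuous. The genuinely delicate step is then upgrading this continuous $\iftg$ to a differentiable one and identifying its derivative: one expands $\iftf$ to first order along $(\iftx, \iftg(\iftx))$, uses invertibility of $J^{\iftf}_{\ifty}$ to isolate the increment of $\iftg$, and bounds the remainder --- and it is precisely here that the continuity of the derivatives of $\iftf$ is needed.
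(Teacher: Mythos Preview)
Your argument is correct and is one of the standard textbook proofs of the classical Implicit Function Theorem: reduce to the Inverse Function Theorem via the block-triangular Jacobian of $F(\iftx,\ifty)=(\iftx,\iftf(\iftx,\ifty))$, extract $\iftg$ from the local inverse, and then differentiate the identity $\iftf(\iftx,\iftg(\iftx))=0$ to obtain the derivative formula. Your alternative contraction-mapping route is also standard and correctly identifies the delicate point (upgrading continuity of $\iftg$ to differentiability).

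However, note that the paper does \emph{not} prove this theorem at all: it is simply stated in the appendix as the classical result attributed to Cauchy, with no accompanying proof. The only related argument the paper gives is Lemma~1, which \emph{assumes} the existence of a differentiable fixed-point map $\ep_\infty(\hp)$ and then derives the derivative formula by the same chain-rule computation you perform in your third paragraph. So there is nothing to compare your proof against; you have supplied a full proof where the paper merely quotes the theorem and, separately, re-derives the derivative identity under the standing assumption that the implicit function already exists and is differentiable.
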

            Typically the IFT is presented with $\iftf = f$, $\iftg = g$, $\ifthdom = \Real^{m}$, $\iftedom = \Real^{n}$, $\iftx = x$, $\ifty = y$, $\ifta = a$, $\iftb = b$.
        
        \vspace{-1cm}
        \section{Proofs}\label{app:proofs}
            \begin{lemma*}[1]
                \textnormal{
                    If the recurrence given by unrolling SGD optimization in Eq.~\ref{eqn:sgd_recurrence} has a fixed point $\epA$ (i.e., $0 = \epderiv{\Ltr} |_{\hp, \epA(\hp)}$), then:
                    \begin{equation*}
                        \hpderiv{\epA} = \left. -\left[ \trainHess{\epA(\hp)} \right]^{-1} \trainMixed{\epA(\hp)} \right|_{\epA(\hp)}
                    \end{equation*}
                }
            \end{lemma*}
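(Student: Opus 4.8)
The plan is to recognize that this lemma is the Implicit Function Theorem (Theorem~\ref{thm:IFT}) applied to the training gradient field, and to carry out the single line of implicit differentiation that underlies it. First I would unpack the fixed-point hypothesis: if $\epA$ is a fixed point of $T(\hp,\cdot)$ from Eq.~\ref{eqn:sgd_recurrence}, then $\epA(\hp) = \epA(\hp) - \alpha\left.\trainGrad\right|_{\hp,\epA(\hp)}$, so for any nonzero step size $\alpha$ the identity $\left.\trainGrad\right|_{\hp,\epA(\hp)} = 0$ holds for all $\hp$ in the neighbourhood on which $\epA$ is defined. In other words, $\epA(\hp)$ is a branch of solutions to $\left.\trainGrad\right|_{\hp,\ep} = 0$, which is exactly the setting of Theorem~\ref{thm:IFT} with $f \defeq \trainGrad$, whose $\ep$-Jacobian is the training Hessian $\trainHess{\ep}$.

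Next I would differentiate the identity $\left.\trainGrad\right|_{\hp,\epA(\hp)} = 0$ with respect to $\hp$. Applying the chain rule to the composition $\hp \mapsto \left(\hp,\epA(\hp)\right) \mapsto \trainGrad$ splits the total derivative into the term from the explicit $\hp$-dependence---the training mixed partials $\trainMixed{\ep}$---and the term routed through $\epA$---the training Hessian $\trainHess{\ep}$ composed with the best-response Jacobian $\hpderiv{\epA}$:
\begin{equation*}
    \left.\trainMixed{\ep}\right|_{\hp,\epA(\hp)} + \left.\trainHess{\ep}\right|_{\hp,\epA(\hp)}\,\hpderiv{\epA} = 0.
\end{equation*}
Under the standing assumption that the training Hessian is invertible at $\epA(\hp)$, left-multiplying by $\left[\trainHess{\ep}\right]^{-1}$ and rearranging gives $\hpderiv{\epA} = -\left.\left[\trainHess{\epA(\hp)}\right]^{-1}\trainMixed{\epA(\hp)}\right|_{\epA(\hp)}$, as claimed.

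The computation is short, and the only things needing care are the regularity hypotheses rather than any algebra. Specifically I would (a) note that writing $\hpderiv{\epA}$ presumes $\epA$ is differentiable in $\hp$---this is either taken as part of the hypothesis or, if one wants it for free, supplied by Theorem~\ref{thm:IFT} given $\Ltr \in C^2$ and an invertible Hessian---and (b) keep every Jacobian evaluated at $(\hp,\epA(\hp))$ throughout. As a consistency check I would confirm agreement with the earlier unrolled-gradient formula: at a fixed point every iterate $\ep_{i-k}$ collapses to $\epA$, so that sum reduces to $\sum_{j\le i}\left[I-\trainHess{\epA}\right]^{j}\trainMixed{\epA}$, and taking $i\to\infty$ with $I+\trainHess{\epA}$ contractive (as in Theorem~\ref{thm:unrolled_to_IFT}) the Neumann series collapses to $\left[\trainHess{\epA}\right]^{-1}$, recovering the same expression. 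The "main obstacle" is therefore not a hard step at all---it is merely ensuring these smoothness and invertibility conditions are in force; the substantive content is just implicit differentiation of the stationarity condition.
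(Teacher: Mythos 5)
Your proposal is correct and follows essentially the same route as the paper's own proof: differentiate the stationarity condition $\left.\smash{\trainGrad}\right|_{\hp,\epA(\hp)}=0$ with respect to $\hp$, apply the chain rule to obtain $\trainMixed{\epA(\hp)} + \trainHess{\epA(\hp)}\hpderiv{\epA}=0$, and left-multiply by the inverse Hessian. Your extra remarks (unpacking the fixed point of $T$ into the stationarity condition, flagging differentiability and invertibility, and the Neumann-series consistency check) are sound additions but do not change the argument.
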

            \begin{proof}
                \begin{align*}
                    &\Rightarrow \hpderiv{}\left( \left. \epderiv{\Ltr} \right|_{\hp, \epA(\hp)} \right) = 0
                        && \text{given}\\
                    &\Rightarrow \left. \left( \trainMixed{\epA(\hp)} I + \trainHess{\epA(\hp)} \hpderiv{\epA} \right) \right|_{\hp, \epA(\hp)} = 0
                        && \left. \text{chain rule through } \right|_{\hp, \epA(\hp)}\\
                    &\Rightarrow \left. \trainHess{\epA(\hp)} \hpderiv{\epA} \right|_{\hp, \epA(\hp)} = \left. -\trainMixed{\epA(\hp)} \right|_{\hp, \epA(\hp)}
                        && \text{re-arrange terms}\\
                    &\Rightarrow \left. \hpderiv{\epA} \right|_{\hp} = \left. -\left[ \trainHess{\epA(\hp)} \right]^{-1} \trainMixed{\epA(\hp)} \right|_{\hp}
                        && \text{left-multiply by } \left. \left[ \trainHess{\epA(\hp)} \right]^{-1} \right|_{\hp, \epA(\hp)}
                \end{align*}
            \end{proof}
            
            \vspace{1cm}
            \begin{lemma*}[2]
                \textnormal{
                    Given the recurrence from unrolling SGD optimization in Eq.~\ref{eqn:sgd_recurrence} we have:
                    \begin{equation*}
                        \hpderiv{\ep_{i+1}} = -\sum_{j\leq i} \left( \prod_{k<j} I - \left. \trainHess{\ep_{i-k}(\hp)} \right|_{\hp, \ep_{i-k}(\hp)} \right) \left. \trainMixed{\ep_{i-j}(\hp)} \right|_{\hp, \ep_{i-j}(\hp)}
                    \end{equation*}
                }
            \end{lemma*}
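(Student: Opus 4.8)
The plan is to first differentiate the one-step update map through the chain rule to obtain a first-order \emph{linear} recurrence for the Jacobian $\hpderiv{\ep_{t+1}}$ in terms of $\hpderiv{\ep_t}$, and then to solve that linear recurrence in closed form by induction on $i$, which reproduces exactly the stated sum of matrix products. Throughout I take the step size $\alpha = 1$ as in the surrounding exposition, and I use that the initialization $\ep_0$ does not depend on $\hp$, so $\hpderiv{\ep_0} = 0$; this is the base case of the induction.

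For brevity write $A_t \defeq \left( I - \trainHess{\ep_t(\hp)}\right)\big|_{\hp, \ep_t(\hp)}$ and $B_t \defeq \trainMixed{\ep_t(\hp)}\big|_{\hp, \ep_t(\hp)}$. Differentiating Eq.~\ref{eqn:sgd_recurrence}, namely $\ep_{t+1}(\hp) = \ep_t(\hp) - \epderiv{\Ltr(\hp, \ep_t(\hp))}$, with respect to $\hp$, and expanding the total $\hp$-derivative of $\epderiv{\Ltr(\hp, \ep_t(\hp))}$ via the chain rule as the mixed partial $B_t$ plus the Hessian times $\hpderiv{\ep_t}$, gives
\begin{equation*}
    \hpderiv{\ep_{t+1}} = \hpderiv{\ep_t} - B_t - \trainHess{\ep_t(\hp)}\,\hpderiv{\ep_t} = A_t\,\hpderiv{\ep_t} - B_t .
\end{equation*}
This linear recurrence is the crux; the rest is unrolling it.

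I would then induct on $i$. Base case $i = 0$: since $\hpderiv{\ep_0} = 0$, the recurrence yields $\hpderiv{\ep_1} = -B_0$, which agrees with the claimed formula (its only term is $j = 0$, whose leading product $\prod_{k<0}$ is empty, i.e. the identity). Inductive step: assuming the formula for $\hpderiv{\ep_i}$, substitute it into $\hpderiv{\ep_{i+1}} = A_i\,\hpderiv{\ep_i} - B_i$ and pull the factor $A_i$ inside each summand using
\begin{equation*}
    A_i \prod_{k<j}\bigl(I - \trainHess{\ep_{i-1-k}(\hp)}\bigr) = \prod_{k<j+1}\bigl(I - \trainHess{\ep_{i-k}(\hp)}\bigr) ;
\end{equation*}
the reindexing $j \mapsto j+1$ then turns the $A_i\,\hpderiv{\ep_i}$ part into precisely the $j \ge 1$ terms of the target sum, while $-B_i$ supplies its $j = 0$ term. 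Collecting terms gives the claimed identity for $\hpderiv{\ep_{i+1}}$, closing the induction.

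The only real obstacle I anticipate is bookkeeping: keeping the shifted summation ranges, the empty-product convention for $\prod_{k<0}$, and the evaluation points $\big|_{\hp, \ep_{i-k}(\hp)}$ consistent across the reindexing step. Everything else---the chain-rule derivation of the linear recurrence and the induction itself---is routine, and the fixed-point identity is then recovered from this by sending $i \to \infty$ at a fixed point and summing the resulting Neumann series, as made precise separately in Theorem~\ref{thm:unrolled_to_IFT}.
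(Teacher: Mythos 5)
Your proof is correct and follows essentially the same route as the paper: differentiate the one-step SGD update via the chain rule to obtain the linear recurrence $\hpderiv{\ep_{i+1}} = \left(I - \trainHess{\ep_i(\hp)}\right)\hpderiv{\ep_i} - \trainMixed{\ep_i(\hp)}$, then unroll it. The only difference is presentational---the paper telescopes the recurrence informally, while you formalize the unrolling as an induction and make explicit the base case $\hpderiv{\ep_0}=0$ (initialization independent of $\hp$), which the paper leaves implicit.
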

            \begin{proof}
                \begin{align*}
                    &\left. \hpderiv{\ep_{i+1}} \right|_{\hp}
                    = \hpderiv{} \left( \left. \ep_{i}(\hp) - \epderiv{\Ltr} \right|_{\hp, \ep_{i}(\hp)} \right)
                        && \text{take derivative w.r.t. } \hp\\
                    &\smallSpace= \left. \hpderiv{\ep_{i}} \right|_{\hp} - \hpderiv{}\left( \left. \epderiv{\Ltr} \right|_{\hp, \ep_{i}(\hp)} \right)
                        && \text{chain rule}\\
                    &\smallSpace= \left. \hpderiv{\ep_{i}} \right|_{\hp} - \left. \left( \trainHess{\ep_{i}(\hp)} \hpderiv{\ep_{i}} + \trainMixed{\ep_{i}(\hp)}\right) \right|_{\hp, \ep_{i}(\hp)}
                        && \left. \text{chain rule through } \right|_{\hp, \ep_{i}(\hp)}\\
                    &\smallSpace= \left. -\trainMixed{\ep_{i}(\hp)} \right|_{\hp, \ep_{i}(\hp)} + \left. \left(I - \trainHess{\ep_{i}(\hp)}\right)\hpderiv{\ep_{i}} \right|_{\hp, \ep_{i}(\hp)}
                        && \text{re-arrange terms}\\
                    &\smallSpace= \left. -\trainMixed{\ep_{i}(\hp)} \right|_{\hp, \ep_{i}(\hp)} + \left. \left(I - \trainHess{\ep_{i}(\hp)}\right) \right|_{\hp, \ep_{i}(\hp)}\cdot\\
                        &\medSpace \left. \left( \!\! \left(\! I - \trainHess{\ep_{i-1}(\hp)} \!\right)\! \hpderiv{\ep_{i-1}} - \trainMixed{\ep_{i-1}(\hp)} \!\right) \right|_{\hp, \ep_{i-1}(\hp)}
                        && \text{expand } \hpderiv{\ep_{i}}\\
                    &\smallSpace= \left. -\trainMixed{\ep_{i}(\hp)} \right|_{\hp, \ep_{i}(\hp)} - \left( \!I - \left. \trainHess{\ep_{i}(\hp)} \right|_{\hp, \ep_{i}(\hp)} \! \right) \! \left. \trainMixed{\ep_{i-1}(\hp)} \right|_{\hp, \ep_{i-1}(\hp)} +\\
                        &\medSpace \left[ \prod_{k<2}I - \left. \trainHess{\ep_{i-k}(\hp)} \right|_{\hp, \ep_{i-k}(\hp)} \right] \left. \hpderiv{\ep_{i-1}} \right|_{\hp}
                        && \text{re-arrange terms}\\
                    &\smallSpace= \cdots 
                        && \text{}\\
                    &\textnormal{So, }\hpderiv{\ep_{i+1}} = -\sum_{j\leq i} \left[ \prod_{k<j} I - \left. \trainHess{\ep_{i-k}(\hp)} \right|_{\hp, \ep_{i-k}(\hp)} \right] \left. \trainMixed{\ep_{i-j}(\hp)} \right|_{\hp, \ep_{i-j}(\hp)}
                        && \text{telescope the recurrence}
                \end{align*}
            \end{proof}
            \begin{thm*}[Neumann-SGD]
                \textnormal{
                    Given the recurrence from unrolling SGD optimization in Eq.~\ref{eqn:sgd_recurrence}, if $\ep_0 = \ep^*(\hp)$:
                    \begin{equation*}
                        \hpderiv{\ep_{i+1}} = -\left. \left( \sum_{j \leq i} \left[I - \trainHess{\ep(\hp)}\right]^j \right) \trainMixed{\ep(\hp)} \right|_{\ep^*(\hp)}
                    \end{equation*}
                }
                and if \textnormal{$I + \trainHess{\ep(\hp)}$} is contractive:
                \textnormal{
                    \begin{equation*}
                        \lim_{i\to\infty} \hpderiv{\ep_{i+1}} = -\left. \left[ \trainHess{\ep(\hp)} \right]^{-1} \trainMixed{\ep(\hp)} \right|_{\ep^*(\hp)}
                    \end{equation*}
                }
            \end{thm*}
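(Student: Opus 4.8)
The plan is to obtain both identities as corollaries of Lemma~2 (the telescoped unrolled-gradient formula), the key leverage being that the initializer $\ep_0 = \ep^*(\hp)$ is a fixed point of the SGD update map $T(\hp,\ep) = \ep - \left.\trainGrad\right|_{\hp,\ep}$. First I would show by induction that $\ep_i(\hp) = \ep^*(\hp)$ for all $i \geq 0$: since $\ep^*(\hp) = \argmin_{\ep}\Ltr(\hp,\ep)$ is stationary, $\left.\trainGrad\right|_{\hp,\ep^*(\hp)} = 0$, so applying $T$ at $\ep^*(\hp)$ returns $\ep^*(\hp)$ unchanged (this uses the $\alpha=1$ convention of the theorem statement, but $\alpha\cdot 0 = 0$ makes it hold for any step size). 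Hence every Hessian $\left.\trainHess{\ep_{i-k}(\hp)}\right|_{\hp,\ep_{i-k}(\hp)}$ and every mixed-partial block $\left.\trainMixed{\ep_{i-j}(\hp)}\right|_{\hp,\ep_{i-j}(\hp)}$ occurring in Lemma~2 is evaluated at the single point $\ep^*(\hp)$.

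Write $H \defeq \left.\trainHess{\ep(\hp)}\right|_{\ep^*(\hp)}$ and $B \defeq \left.\trainMixed{\ep(\hp)}\right|_{\ep^*(\hp)}$. Substituting into Lemma~2, the inner product $\prod_{k<j}\bigl(I - \trainHess{\ep_{i-k}(\hp)}\bigr)$ collapses to $(I-H)^j$ since every factor is $I-H$, so $\hpderiv{\ep_{i+1}} = -\sum_{j\leq i}(I-H)^j B$, which is the first claim. For the second, I would let $i \to \infty$: the SGD transition Jacobian is $\epderiv{T} = I - H$, so ``$\epderiv{T}$ contractive'' is exactly the condition under which $\sum_{j=0}^{\infty}(I-H)^j$ converges to $(I-(I-H))^{-1} = H^{-1}$, the Neumann-series identity already invoked in Section~\ref{sec:theory}. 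Since $B$ is a fixed matrix, the limit passes through the product and gives $\lim_{i\to\infty}\hpderiv{\ep_{i+1}} = -H^{-1}B = -\left[\trainHess{\ep(\hp)}\right]^{-1}\trainMixed{\ep(\hp)}\big|_{\ep^*(\hp)}$, matching Theorem~\ref{thm:IFT} and Lemma~1.

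The point I would state most carefully --- and the only one with any subtlety --- is the semantics of the initialization in Lemma~2: its telescoped formula implicitly treats $\ep_0$ as $\hp$-independent ($\hpderiv{\ep_0}=0$), i.e.\ we freeze the initializer at the numerical value $\ep^*(\hp)$ and then differentiate through the updates. This is what makes the unrolled Jacobians $\hpderiv{\ep_{i+1}}$ the finite Neumann truncations rather than the exact best-response Jacobian, even though the iterates $\ep_i(\hp)$ never move; the truncations agree with $\hpderiv{\response}$ only in the contractive limit. Everything else --- the matrix-norm convergence of the Neumann series from contractivity, and commuting multiplication by the constant $B$ with the limit --- is routine.
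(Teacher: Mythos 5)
Your proposal is correct and follows essentially the same route as the paper's proof: substitute the fixed-point iterates $\ep_i(\hp)=\ep^*(\hp)$ into Lemma~2 so the products collapse to powers $(I-\trainHess{\ep(\hp)})^j$ evaluated at $\ep^*(\hp)$, then pass to the limit via the Neumann series under contractivity. Your added remarks---the induction showing all iterates stay at $\ep^*(\hp)$, the implicit convention $\hpderiv{\ep_0}=0$ in the telescoped formula, and phrasing the convergence condition as contractivity of the transition Jacobian $I-\trainHess{\ep(\hp)}$ (consistent with the discussion preceding the theorem)---only make explicit what the paper leaves implicit.
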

            \begin{proof}
                \begin{align*}
                    &\lim_{i\to\infty} \left. \hpderiv{\ep_{i+1}} \right|_{\hp}
                        && \text{take} \lim_{i\to\infty} \\
                    &\smallSpace= \lim_{i\to\infty} \left( -\sum_{j\leq i} \left[ \prod_{k<j} I - \left. \trainHess{\ep_{i-k}(\hp)} \right|_{\hp, \ep_{i-k}(\hp)} \right] \left. \trainMixed{\ep_{i-j}(\hp)} \right|_{\hp, \ep_{i-j}(\hp)} \right)
                        && \text{by Lemma 2}\\
                    &\smallSpace= -\lim_{i\to\infty} \left. \left( \sum_{j\leq i} \left[ \prod_{k<j} I - \trainHess{\ep(\hp)} \right] \trainMixed{\ep(\hp)} \right) \right|_{\hp, \ep^*(\hp)}
                        && \ep_0 = \ep^*(\hparam) = \ep_i \\
                    &\smallSpace= -\lim_{i\to\infty} \left. \left( \sum_{j \leq i} \left[I - \trainHess{\ep(\hp)}\right]^j \right) \trainMixed{\ep(\hp)} \right|_{\hp, \ep^*(\hp)}
                        && \text{simplify} \\
                    &\smallSpace= -\left. \left[I - \left( I - \trainHess{\ep(\hp)} \right) \right]^{-1} \trainMixed{\ep(\hp)} \right|_{\hp, \ep^*(\hp)}
                        && \text{contractive \& Neumann series} \\
                    &\smallSpace= -\left. \left[ \trainHess{\ep(\hp)} \right]^{-1} \trainMixed{\ep(\hp)} \right|_{\hp, \ep^*(\hp)}
                        && \text{simplify}
                \end{align*}
            \end{proof}
        
        \twocolumn
            
        \section{Experiments}\label{app:experiments}
            We use PyTorch~\cite{paszke2017automatic} as our computational framework.
            All experiments were performed on NVIDIA TITAN Xp GPUs.
        
            For all CNN experiments we use the following optimization setup:
            for the \nn weights we use Adam~\cite{kingma2014adam} with a learning rate of 1e-4.
            For the hyperparameters we use RMSprop~\cite{hinton2012neural} with a learning rate of 1e-2.
            \subsection{Overfitting a Small Validation Set}
                We see our algorithm's ability to overfit the validation data (see Fig.~\ref{fig:overfit_valid_all}).
                We use $\numOverfitTrain$ training input, and $\numOverfitValid$ validation input with the standard testing partition for both MNIST and CIFAR-\num{10}.
                We check performance with logistic regression (Linear), a 1-Layer fully-connected \nn with as many hidden units as input size (ex., $\num{28} \times \num{28} = \num{784}$, or $\num{32} \times \num{32} \times \num{3} = \num{3072}$), LeNet~\cite{lecun1998gradient}, AlexNet~\cite{krizhevsky2012imagenet}, and ResNet\num{44}~\cite{he2016deep}.
                In all examples we can achieve \SI{100}{\percent} training and validation accuracy, while the testing accuracy is significantly lower.
                \begin{figure}[h!]
                    \centering
                    \begin{tikzpicture}
                        \centering
                        \node (img){\includegraphics[trim={0.15cm 0cm 0.15cm 0cm},clip, width=.48\linewidth]{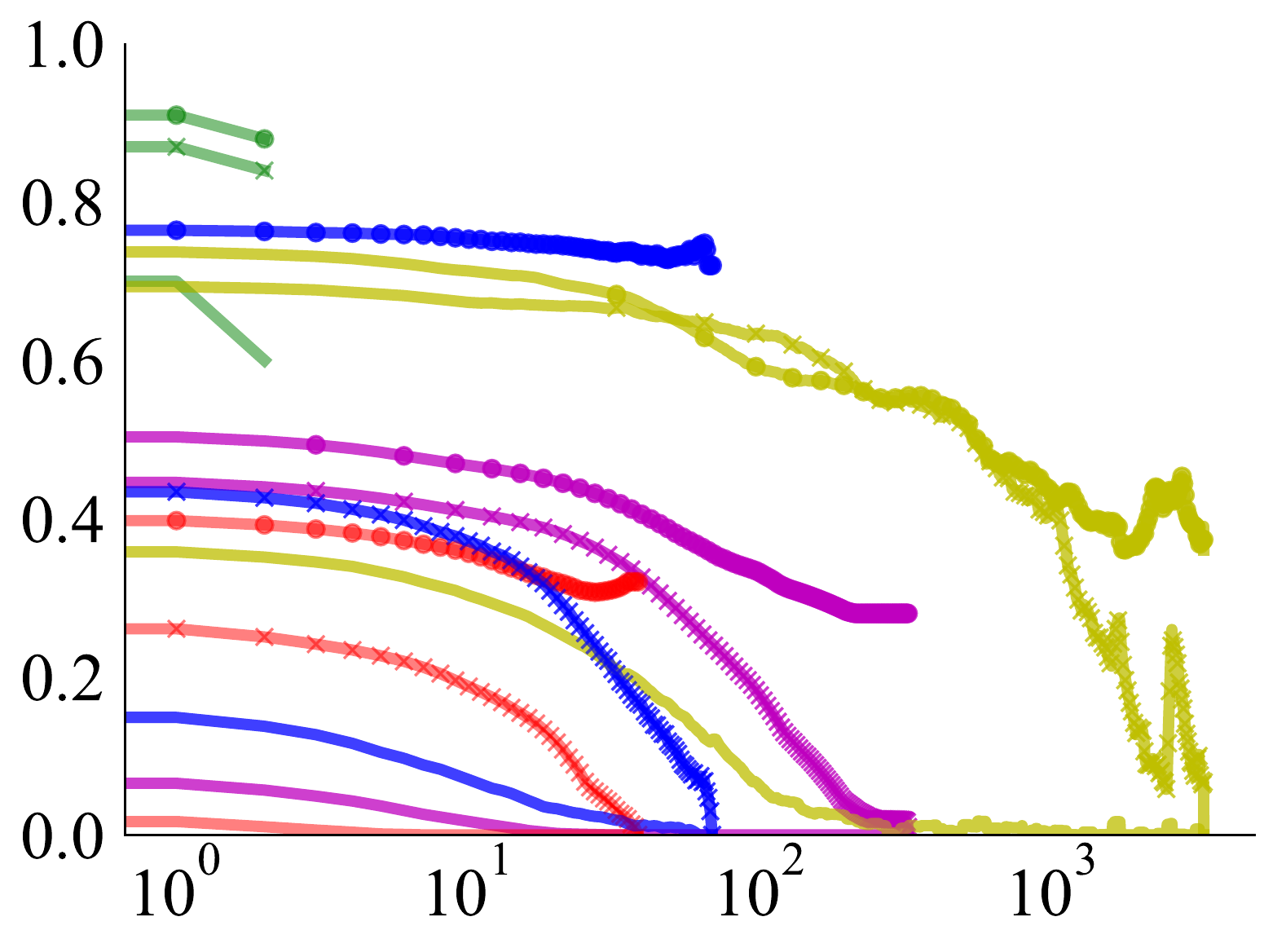}};
                        \node[left=of img, node distance=0cm, rotate=90, xshift=1.75cm, yshift=-.90cm, font=\color{black}] {Classification Error};
                        \node[above=of img, node distance=0cm, yshift=-1.25cm,font=\color{black}] {MNIST};
                        \node[below=of img, node distance=0cm, yshift=1.25cm,font=\color{black}] {Iteration};
                        \node (img2)[right=of img, xshift=-1.3cm]{\includegraphics[trim={1.3cm 0cm 0.15cm 0cm},clip, width=.455\linewidth]{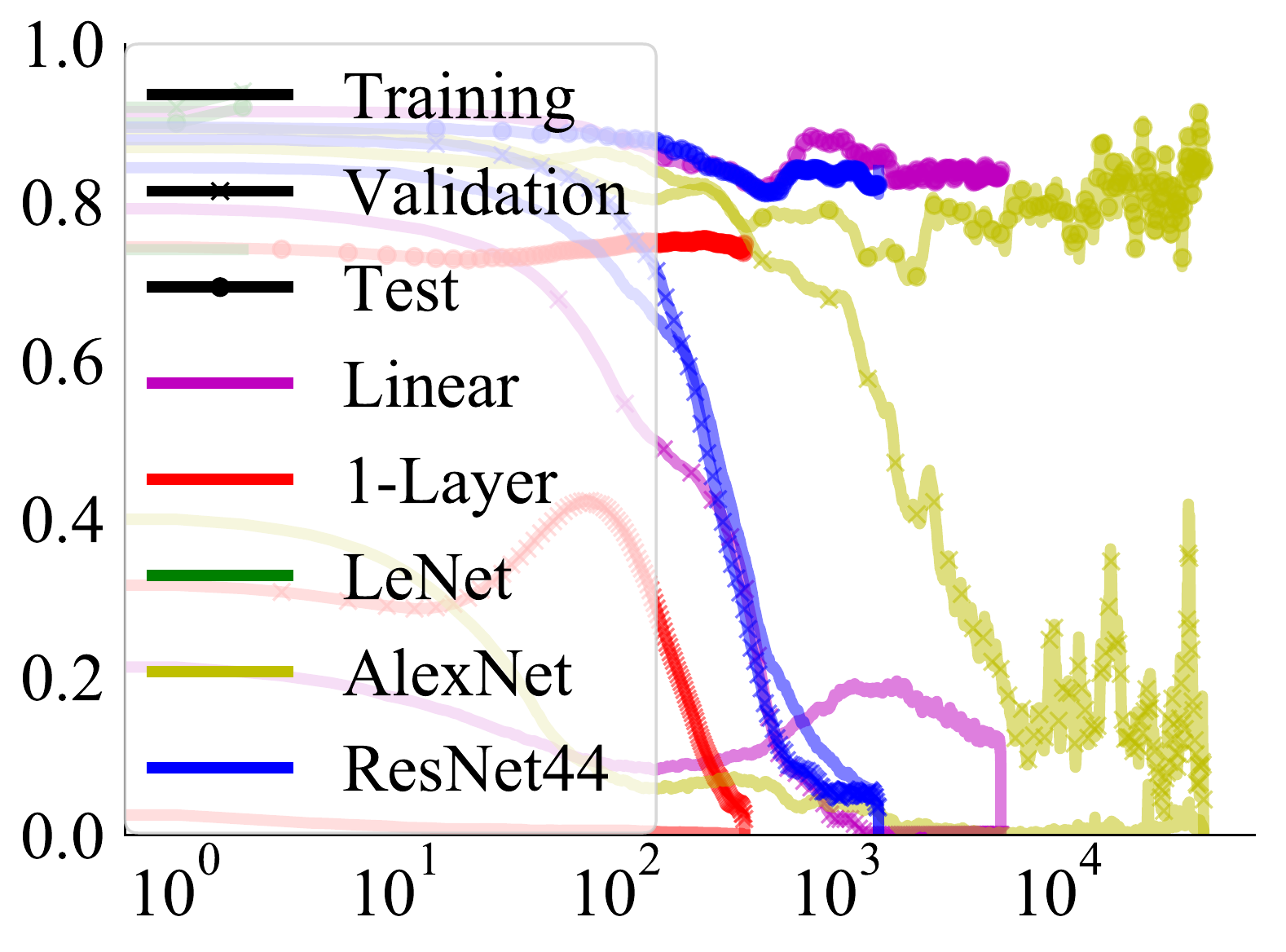}};
                        \node[above=of img2, node distance=0cm, yshift=-1.25cm,font=\color{black}] {CIFAR-\num{10}};
                        \node[below=of img2, node distance=0cm, yshift=1.25cm,font=\color{black}] {Iteration};
                    \end{tikzpicture}
                    \vspace{-0.6cm}
                    \caption{{Overfitting validation data.}
                        {Algorithm~\ref{alg:joint_train} can overfit the validation dataset.}
                        We use $\numOverfitTrain$ training input, and $\numOverfitValid$ validation input with the standard testing partition for both MNIST and CIFAR-\num{10}.
                        We check the performance with logistic regression (Linear), a 1-Layer fully-connected \nn with as many hidden units as input size (ex., $\num{28} \times \num{28} = \num{784}$, or $\num{32} \times \num{32} \times \num{3} = \num{3072}$), LeNet~\cite{lecun1998gradient}, AlexNet~\cite{krizhevsky2012imagenet}, and ResNet\num{44}~\cite{he2016deep}.
                        Separate lines are plotted for the training, validation, and testing error.
                        In all examples we achieve \SI{100}{\percent} training and validation accuracy, while the testing accuracy is significantly lower.
                    }
                    \label{fig:overfit_valid_all}
                \end{figure}
            
            \newcommand{\numInverseTrain}{\num{50}}
            \newcommand{\numInverseValid}{\num{50}}
            \newcommand{\inverseBatch}{\num{50}}
            
            \subsection{Dataset Distillation}
                With MNIST we use the entire dataset in validation, while for CIFAR we use $\numDistillValid$ validation data points.
                \begin{figure*}
                    \centering
                    \begin{tikzpicture}
                        \centering
                        \node (img){\includegraphics[trim={3cm 2.5cm 2.5cm 3cm},clip, width=.98\linewidth]{learned_images_CIFAR100_100_mlp.pdf}};
                        \node[above=of img, node distance=0cm, yshift=-1.cm,font=\color{black}] {\textbf{CIFAR-100 Distillation}};
                    \end{tikzpicture}
                    \vspace{-0.02\textheight}
                    \caption{
                        The complete dataset distillation for CIFAR-100.
                        Referenced in Fig.~\ref{fig:distilled_cifar}.
                    }
                    \label{app:cifar-100-distillation}
                \end{figure*}
                
            \subsection{Learned Data Augmentation}
                \textbf{Augmentation Network Details:}
                    Data augmentation can be framed as an image-to-image transformation problem; inspired by this, we use a U-Net~\cite{ronneberger2015u} as the data augmentation network.
                    To allow for stochastic transformations, we feed in random noise by concatenating a noise channel to the input image, so the resulting input has 4 channels.

            \subsection{RNN Hyperparameter Optimization}
                \label{app:exp_rnn}
                Our base our implementation on the AWD-LSTM codebase \url{https://github.com/salesforce/awd-lstm-lm}.
                Similar to~\cite{gal2016theoretically} we used a \num{2}-layer LSTM with \num{650} hidden units per layer and \num{650}-dimensional word embeddings.

                \paragraph{Overfitting Validation Data:}
                We used a subset of \num{10} training sequences and 10 validation sequences, and tuned separate weight decays per parameter.
                The LSTM architecture we use has \num{13280400} weights, and thus an equal number of weight decay hyperparameters.

                \paragraph{Optimization Details:}
                For the large-scale experiments, we follow the training setup proposed in \cite{merity2017regularizing}: for the \nn weights, we use SGD with learning rate \num{30} and gradient clipping to magnitude \num{0.25}.
                The learning rate was decayed by a factor of 4 based on the nonmonotonic criterion introduced by~\cite{merity2017regularizing} (i.e., when the validation loss fails to decrease for 5 epochs).
                To optimize the hyperparameters, we used Adam with learning rate \num{0.001}.
                We trained on sequences of length 70 in mini-batches of size 40.
\end{document}